\newtheorem{theorem}{Theorem}
\definecolor{color3}{gray}{0.95}
\definecolor{lightmauve}{rgb}{0.96, 0.90, 0.98}
\definecolor{fpmean}{HTML}{ED6C51}
\definecolor{bimean}{HTML}{3F85F0}
\definecolor{arb}{HTML}{EC692E}
\definecolor{arbx}{HTML}{3D70D2}
\definecolor{arbrc}{HTML}{779943}
\definecolor{navyblue}{rgb}{0.0, 0.0, 0.5}
\definecolor{citecolor}{HTML}{962C37}
\definecolor{linkcolor}{HTML}{ED1C24}
\let\@algcomment\relax
\newcommand\algcomment[1]{\def\@algcomment{\footnotesize#1}}
\renewcommand\fs@ruled{\def\@fs@cfont{\bfseries}\let\@fs@capt\floatc@ruled
  \def\@fs@pre{\hrule height.8pt depth0pt \kern2pt}%
  \def\@fs@post{}%
  \def\@fs@mid{\kern2pt\hrule\kern2pt}%
  \let\@fs@iftopcapt\iftrue}
\newcommand{\thickhline}{%
	\noalign {\ifnum 0=`}\fi \hrule height 1pt
	\futurelet \reserved@a \@xhline
}
\theoremstyle{plain}
\theoremstyle{definition}
\theoremstyle{remark}
\icmltitlerunning{DBellQuant: Breaking the Bell with Double-Bell Transformation for LLMs Post Training Binarization}
\begin{document}

\twocolumn[
  \icmltitle{DBellQuant: Breaking the Bell with Double-Bell Transformation for LLMs Post Training Binarization}

  % It is OKAY to include author information, even for blind submissions: the
  % style file will automatically remove it for you unless you've provided
  % the [accepted] option to the icml2026 package.

  % List of affiliations: The first argument should be a (short) identifier you
  % will use later to specify author affiliations Academic affiliations
  % should list Department, University, City, Region, Country Industry
  % affiliations should list Company, City, Region, Country

  % You can specify symbols, otherwise they are numbered in order. Ideally, you
  % should not use this facility. Affiliations will be numbered in order of
  % appearance and this is the preferred way.
  \icmlsetsymbol{equal}{*}

  \begin{icmlauthorlist}
    \icmlauthor{Zijian Ye}{equal,hku}
    \icmlauthor{Wei Huang}{equal,hku}
    \icmlauthor{Yifei Yu}{hku}
    \icmlauthor{Tianhe Ren}{hku}
    \icmlauthor{Zhongrui Wang}{sustech}
    \icmlauthor{Xiaojuan Qi}{hku}

    %\icmlauthor{}{sch}
    %\icmlauthor{}{sch}
  \end{icmlauthorlist}

  \icmlaffiliation{hku}{Department of Electrical and Electronic Engineering, University of Hong Kong, Hong Kong, China}
  \icmlaffiliation{sustech}{Southern University of Science and Technology, Shenzhen, China}
  % \icmlaffiliation{sch}{School of ZZZ, Institute of WWW, Location, Country}

  \icmlcorrespondingauthor{Zhongrui Wang}{wangzr@sustech.edu.cn}
  \icmlcorrespondingauthor{Xiaojuan Qi}{xjqi@eee.hku.hk}

  % You may provide any keywords that you find helpful for describing your
  % paper; these are used to populate the "keywords" metadata in the PDF but
  % will not be shown in the document
  \icmlkeywords{Machine Learning, ICML}

  \vskip 0.3in
]

% this must go after the closing bracket ] following \twocolumn[ ...

% This command actually creates the footnote in the first column listing the
% affiliations and the copyright notice. The command takes one argument, which
% is text to display at the start of the footnote. The \icmlEqualContribution
% command is standard text for equal contribution. Remove it (just {}) if you
% do not need this facility.

% Use ONE of the following lines. DO NOT remove the command.
% If you have no special notice, KEEP empty braces:
\printAffiliationsAndNotice{}  % no special notice (required even if empty)
% Or, if applicable, use the standard equal contribution text:
% \printAffiliationsAndNotice{\icmlEqualContribution}

\begin{abstract}
Large language models (LLMs) demonstrate remarkable performance but face substantial computational and memory challenges that limit their practical deployment. Quantization has emerged as a promising solution; however, its effectiveness is often limited by quantization errors arising from weight distributions that are not quantization-friendly and the presence of activation outliers. 
To address these challenges, we introduce DBellQuant, an innovative post-training quantization (PTQ) framework that achieves nearly 1-bit weight compression and 6-bit activation quantization with minimal performance degradation. DBellQuant uses learnable transformation to map single-bell weight distribution to dual-bell distribution to reduce binarization error and smooth activations using inverse transformation. DBellQuant sets a new state-of-the-art by preserving superior model performance under aggressive weight and activation quantization. For example, on the Wikitext2 dataset, DBellQuant achieves a perplexity of \textbf{14.39} on LLaMA2-13B with nearly 1-bit weight and 6-bit activation quantization, significantly outperforming BiLLM’s 21.35 without activation quantization, underscoring its potential in compressing LLMs for real-world edge applications. Our code is available at: \url{https://github.com/ZijianYY/DBellQuant}.
\end{abstract}

\section{Introduction}
In recent years, the rapid advancement of large language models (LLMs)  demonstrate exceptional performance in a variety of complex tasks that involve natural language understanding and generation \citep{achiam2023gpt,dubey2024llama}. However, these models often comprise hundreds of billions of parameters, posing significant challenges for their deployment in real-world edge applications because of the substantial computational and memory requirements (e.g. a 70B model requires around 150GB GPU memory), resulting in huge operational costs and unacceptable inference latency.

\begin{figure}

    \centering
    \includegraphics[width=0.41\textwidth]{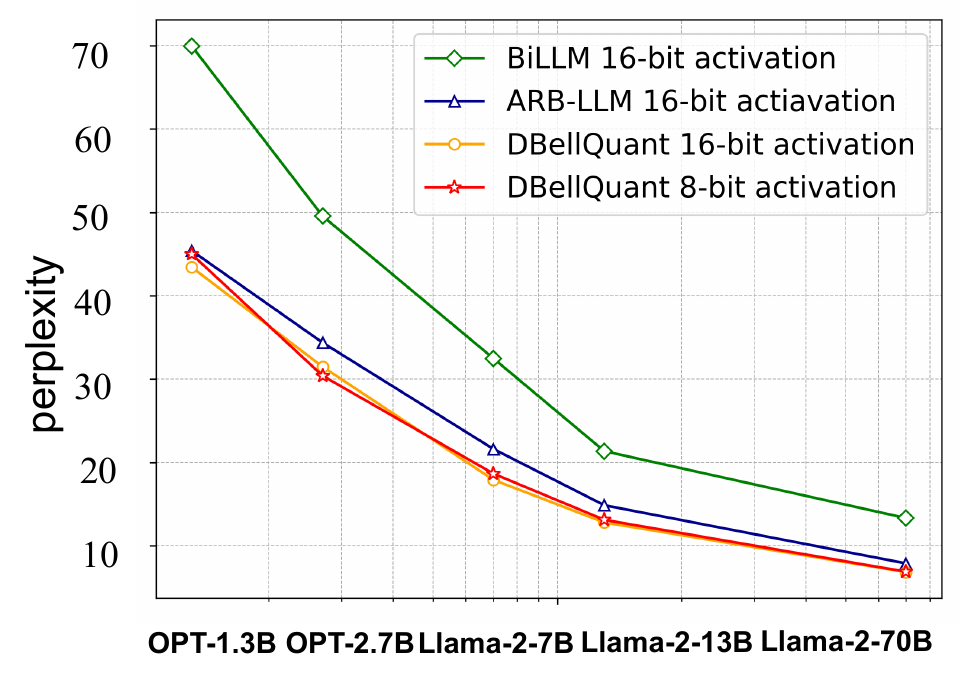} % 替换为实际图片路径
    \vspace{-0.1in}
    \caption{
        Performance on Wikitext2 dataset. DBellQuant outperforms weight-only quantization method under 8-bit activation setting. 
    }
    \label{fig:distribution_shift}
\vspace{-0.2in}
\end{figure}

\begin{figure*}[!t]
    \centerline{\includegraphics[width=1.0\linewidth]{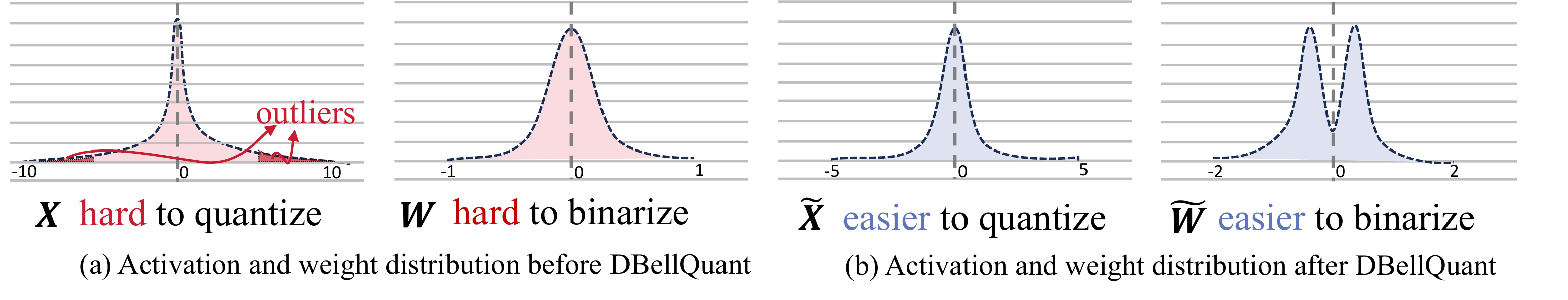}}
    \vspace{-0.05in}
    \caption{(a) Before applying DBellQuant, activations exhibit significant outliers, making quantization challenging, while the single-bell-shaped weight distribution hinders binarization. (b) After applying DBellQuant, activations are smoothed with substantially fewer outliers, facilitating easier quantization. Weight distribution is transformed to dual-bell form, which is more conducive to binarization. }
    \label{fig1}
\vspace{-0.18in}
\end{figure*}

Quantization is a compelling route to LLM compression, and weight binarization is especially attractive for sharply reducing model size~\citep{wang2024bitnet,yu2024super}. Recent PTQ methods-- e.g., PB-LLM and BiLLM-- mitigate binarization error with finer-grained treatment of salient weights~\citep{shang2023pb,huang2024billm}. However, activation outliers remain a major obstacle to simultaneously quantizing activations~\citep{sun2024massive}, undercutting memory and latency gains. Scale-redistribution techniques~\citep{xiao2023smoothquant,shao2023omniquant} and Hadamard-based transforms~\citep{ashkboos2024quarot,liu2024spinquant,sun2025flatquantflatnessmattersllm,lin2024duquantdistributingoutliersdual} help at higher weight precisions, but no PTQ weight binarization method concurrently achieves effective activation quantization. This is because pushing weights to 1-bit increases sensitivity to activation outliers, while existing fixes shift the difficulty back to the weights or require higher-bit activations for accuracy. Quantization-aware training (QAT) approaches such as BitNet can reach 1-bit weights with low-bit activations~\citep{wang2024bitnet}, but at the cost of heavy retraining. Thus, there is a need for post-training weight binarization that preserves accuracy while also enabling  activation quantization.

To this end, we begin by revisiting the distribution characteristics of activations and weights in LLMs (Fig.~\ref{fig1}(a)). We observe that the unimodal nature of weight distributions leads to substantial quantization errors, particularly in the case of low 1-bit quantization. Ideally, a dual-bell-shaped weight distribution (Fig.~\ref{fig1}(b)) can effectively reduce binarization errors. In addition, activation distribution suffer from outliers and thus large quantization errors, which demands a narrower distribution. This motivates a key question: \textit{Is it possible to transform weights into a dual-bell shape distribution while simultaneously addressing activation outliers to facilitate both activation quantization and weight binarization?}

Building on these observations, we propose DBellQuant, a novel weight-activation quantization framework for efficient post-training quantization (PTQ). DBellQuant enables activation quantization while achieving near 1-bit weight compression with minimal accuracy loss.  The core mechanism is a learnable, equivalence-preserving transformation that maps each layer’s weight distribution toward a dual-bell shape; its inverse is applied to the input activations to keep the computation unchanged. 
We analyze the feasibility of dual-bell transformations for binarization and propose a lightweight algorithm, \textit{Learnable Transformation for Dual-Bell} (LTDB), which initializes the transform and optimizes it with a custom objective that drives weights to cluster around two modes, with early stopping to ensure stability. 
By doing so, weights originally exhibiting unimodal distributions-- challenging for binarization-- are mapped into near-symmetric dual-bell distributions (Fig.~\ref{fig1}(b)), significantly reducing binarization error. Moreover, as discussed in Sec.~\ref{sec:activation}, experiments show that applying the inverse transform to inputs consistently contracts activation ranges and suppresses outliers, making activations more amenable to quantization (Fig.~\ref{fig1}(b)). 
%-- an emergent benefit rather than an explicitly optimized target.

Experimental results demonstrate that the equivalent transformation strategy of \textbf{DBellQuant} significantly reduces the loss caused by weight binarization. For the first time under PTQ conditions, it achieves near 1-bit weight compression while simultaneously compressing activations to 6 bits or even 4bits. Across various LLMs and evaluation metrics, \textbf{DBellQuant} consistently achieves state-of-the-art results. For instance, on the Wikitext2 benchmark, we achieve a perplexity of 21.69 on LLaMA-2-7B using only 6-bit activations, even with 4-bit activations, we achieve a perplexity of 23.04, significantly surpassing the performance of BiLLM, which achieves a perplexity of 21.35 .

\section{Related Work}

\paragraph{Quantization for Large Language Models}
The massive parameter size of LLMs poses significant challenges in terms of memory consumption and computational efficiency. Therefore, quantization is crucial to compress these models, reducing resource requirements while preserving performance for practical deployment.
LLMs quantization have introduced a variety of innovative techniques to enhance efficiency while maintaining accuracy. Works like GPTQ~\citep{frantar2022gptq} and OBQ~\citep{frantar2023optimalbraincompressionframework} minimizes reconstruction error by adjusting the remaining unquantized parameters in the block to compensate for the accuracy loss caused by quantization. LLM.int8()\citep{dettmers2022gpt3} and ZeroQuant\citep{yao2022zeroquant} improve quantization accuracy by introducing additional grouping labels for customized quantization blocks. Other works like SmoothQuant\citep{xiao2023smoothquant} and OmniQuant~\citep{shao2023omniquant} addresses activation outliers by redistributing scaling factors between weights and activations, migrating the quantization difficulty from activation to weights. Additionally, recent approaches leverage Hadamard transformations to suppress activation outliers~\citep{ashkboos2024quarot,liu2024spinquant,sun2025flatquantflatnessmattersllm}, while incoherence processing and non-linear transformation have been proposed for effective low-bit quantization~\citep{chee2024quip2bitquantizationlarge,tseng2024quipbetterllmquantization, zhang2024magrweightmagnitudereduction}. Collectively, these advancements demonstrate that quantization techniques can be scaled to multi-billion-parameter models, achieving substantial reductions in memory consumption and inference latency without compromising model performance.
 \vspace{-2mm}

\paragraph{Binary Quantization} Binary quantization, an extreme low-bit quantization technique that reduces model weights and activations to binary values (e.g., -1 and +1 or 0 and 1), has gained significant attention for its ability to drastically cut memory usage and computational complexity, making it ideal for resource-constrained devices and efficient deployment of large-scale models. However, applying binary quantization to LLMs presents substantial challenges due to their sensitivity to precision loss, particularly in attention mechanisms and large embedding layers. BinaryBERT~\citep{bai2020binarybert} explored binary quantization for BERT, proposing selective preservation of critical weights in higher precision to mitigate performance degradation. In another direction, PB-LLM~\citep{shang2023pb} introduced a partially-binarized approach for LLMs, retaining a small fraction of salient weights in higher precision while binarizing the rest, enabling extreme low-bit quantization without sacrificing linguistic reasoning capabilities. Recent advancements include structural binarization techniques that leverage novel sparsity forms and standardized importance metrics to selectively binarize and sparsify LLM weights~\citep{dong2024stbllmbreaking1bitbarrier}, as well as strategies like alternating refined binarization and column-group bitmap methods to effectively reduce quantization error and address column deviations~\citep{li2024arbllmalternatingrefinedbinarizations}. These innovations collectively advance the feasibility of binary quantization for LLMs, pushing the boundaries of efficiency without compromising performance.
 \vspace{-2mm}

\begin{figure*}[!t]
    \centerline{\includegraphics[width=1.0\linewidth]{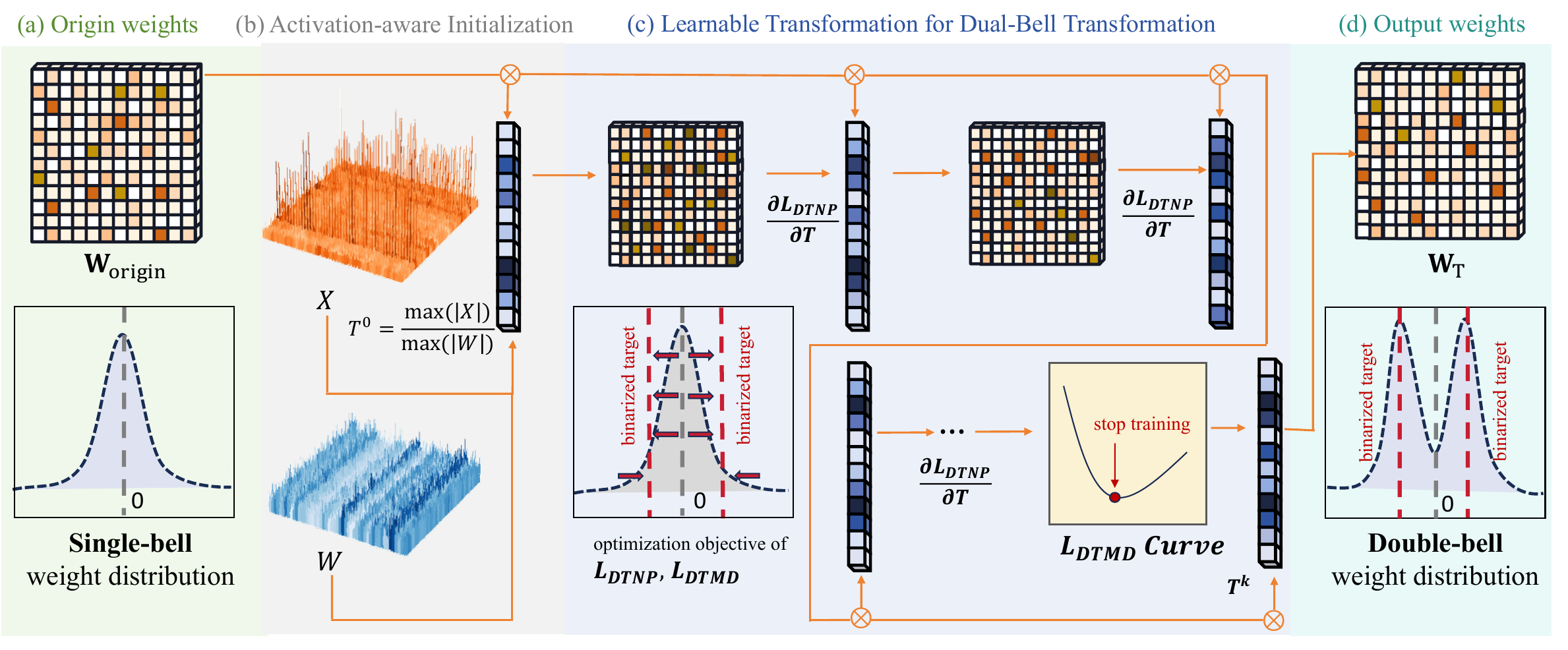}}

    \caption{DBellQuant Framework Overview: (a)First, we can see that the origin weight distribution is single-bell. (b)We utilize Activation-aware initialization to generate origin transformation matrix. (c)We employ the LTDB algorithm for iterative training of the transformation matrix, applying the proposed Dual-Transformation Loss in two ways: for training and as the termination criterion for the training process. (d)The weight distribution after transformation will be double-bell. }
    \label{fig2}
    \vspace{-1.5mm}
\end{figure*}

\section{Methodology}

We begin by exploring the process of binarization, analyzing and theoretically proving the weight distributions suitable for binarization in Sec. \ref{sub31}. Based on our analysis, we propose the Learnable Transformation for Dual-Bell (LTDB) algorithm in Sec. \ref{sub32}. After investigating the potential of utilizing a learnable transformation matrix to achieve the objective, we redesigned an efficient learnable transformation along with a reasonable activation-aware initialization method, taking into account training difficulty and task complexity.  An overview of the algorithm is included in Fig. \ref{fig2}. 

\subsection{Binarization-Friendly Weight Redistribution}
\label{sub31}

By utilizing the sign function, binarization can convert weights in LLMs into binary values. The per-channel binarization and de-binarization process is as follows:
\begin{equation}
\tiny
\beta = \frac{1}{n} \sum_{i=1}^{n} \boldsymbol{W_{i,j}}, \quad 
\boldsymbol{\widetilde{W}} = \text{Sign}(\boldsymbol{W} - \beta),\quad 
\alpha = \frac{1}{n} \sum_{i=1}^{n} |\boldsymbol{W_{i,j}} - \beta_j|
\label{quant1}
\end{equation}
\begin{equation}
\tiny
\text{Sign}(\boldsymbol{W_{i,j}}) =
\begin{cases} 
+1, & \text{if } \boldsymbol{W_{i,j}} > 0, \\
-1, & \text{if } \boldsymbol{W_{i,j} }\leq 0,
\end{cases} ,\quad
\boldsymbol{W_{deq}} = \boldsymbol{\widetilde{W}} \cdot \alpha + \beta
\label{quant2}
\end{equation}
%%证明什么样的分布适合binary
where $\beta $ is the shifting factor and $\alpha$ is the scaling factor for binarization. Previous studies~\citep{huang2024slim} have shown that neural network weights exhibit structured distributions along the channel dimension, with certain channels being more salient. The overall weight distribution typically follows a quasi-Gaussian pattern~\citep{dettmers2022llmint88bitmatrixmultiplication}, as does the channel-wise distribution (Fig. \ref{fig5}). Binarizing such weight matrices introduces significant quantization errors, which can severely degrade model performance. 

In LLM binarization, a dual-bell distribution is theoretically more advantageous than a single-bell distribution due to its natural separation into two distinct clusters, which aligns well with binary quantization levels (e.g., -1 and 1), thereby minimizing quantization error. In contrast, single-bell distributions, concentrated around a single peak, often cause significant overlap when mapped to binary values, reducing representation accuracy (see Appendix~\ref{doubelbellanalysis} for detailed analysis). However, LLM weight distributions typically exhibit single-bell characteristics, and conventional PTQ methods fail to effectively transform them for binarization. While QAT can reshape weight distributions into a dual-bell form through its learning objectives~\citep{wang2023bitnetscaling1bittransformers}, it requires substantial computational resources and prolonged training. To address this, we propose a more efficient PTQ method that rapidly converts single-bell distributions into dual-bell ones, optimizing binary quantization without the need for resource-intensive retraining.
 \vspace{-2mm}
\subsection{Learnable Transformation for Dual-Bell Quantization}
\label{sub32}

\paragraph{Learnable Transformation with Auxiliary Matrix}
\label{sub321}

As mentioned before, double-bell distributions are advantageous for binarization. However, the key question lies in how to transform a weight matrix that originally follows a single-bell distribution into a double-bell one and ensures the computational results remain unchanged. In this section, we first explore the feasibility of achieving such a transformation through the application of an auxiliary matrix:

\begin{theorem}
Let \( \boldsymbol{W} \in \mathbb{R}^{n \times m} \) be a weight matrix where each channel \( \boldsymbol{w}_i \) (for \( i \in \{1, 2, \dots, n\} \)) is sampled from a single-bell Gaussian distribution \( \boldsymbol{w}_i \sim \mathcal{N}(\mu_i, \sigma_i^2) \). There exists a learnable matrix \( T \in \mathbb{R}^{m \times m} \), such that the channels of the transformed matrix \( \boldsymbol{W}' = \boldsymbol{W} T \) follow a double-bell distribution, specifically a mixture of two Gaussians:

\[
\boldsymbol{w}_i' \sim \pi \mathcal{N}(\mu_1, \sigma_1^2) + (1 - \pi) \mathcal{N}(\mu_2, \sigma_2^2),
\]

where \( \pi \in (0, 1) \) is the mixing coefficient, and \( \mu_1, \mu_2, \sigma_1^2, \sigma_2^2 \) are parameters of the double-bell distribution. More detailed proof is shown in Appendix.~\ref{proof2}.
\label{theorem2}
\end{theorem}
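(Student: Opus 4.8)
The whole argument rests on one elementary fact—linear combinations of independent Gaussians are Gaussian—followed by a bookkeeping step that collapses the resulting per-entry Gaussians into a two-component mixture, and finally a separation estimate that upgrades ``mixture of two Gaussians'' to a genuinely bimodal (``double-bell'') shape. First I would spell out the row-wise action of $T$: writing $\boldsymbol{W}' = \boldsymbol{W} T$, the $i$-th transformed channel is $\boldsymbol{w}_i' = \boldsymbol{w}_i T$, whose $j$-th coordinate is $\sum_{k=1}^m \boldsymbol{W}_{ik} T_{kj}$. Since the $\boldsymbol{W}_{ik}$ are i.i.d.\ $\mathcal{N}(\mu_i,\sigma_i^2)$, this coordinate is Gaussian with mean $\mu_i c_j$ and variance $\sigma_i^2 d_j$, where $c_j := \sum_k T_{kj}$ is the $j$-th column sum of $T$ and $d_j := \sum_k T_{kj}^2$ its squared column norm. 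Hence the empirical distribution of the $m$ entries of $\boldsymbol{w}_i'$ is the finite mixture $\frac1m\sum_{j=1}^m \mathcal{N}(\mu_i c_j,\ \sigma_i^2 d_j)$; this is the sense in which a ``channel follows a distribution'' here (for diagonal $T$ the entries are even independent).

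Next I would choose $T$ so that its columns realize only two distinct profiles $(c,d)$. The simplest admissible choice is a diagonal $T=\mathrm{diag}(t_1,\dots,t_m)$ in which $\ell$ of the $t_j$ equal some scalar $a$ and the remaining $m-\ell$ equal some scalar $b\neq a$; then $(c_j,d_j)\in\{(a,a^2),(b,b^2)\}$ and the mixture collapses to $\pi\,\mathcal{N}(\mu_i a,\ \sigma_i^2 a^2)+(1-\pi)\,\mathcal{N}(\mu_i b,\ \sigma_i^2 b^2)$ with $\pi=\ell/m\in(0,1)$, which is exactly the claimed form with $\mu_1=\mu_i a$, $\mu_2=\mu_i b$, $\sigma_1^2=\sigma_i^2 a^2$, $\sigma_2^2=\sigma_i^2 b^2$. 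Taking $a,b$ nonzero keeps $T$ invertible ($\det T=\prod_j t_j\neq 0$), and taking $a\neq b$—for instance $a=1$, $b=-1$, $\pi=\tfrac12$—forces $\mu_1\neq\mu_2$, i.e.\ two distinct bells; the activation-aware, learnable $T$ of Sec.~\ref{sub32} is a non-diagonal refinement of this construction that additionally controls how $T^{-1}$ acts on activations.

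The step I expect to be the real obstacle is ensuring the two-component mixture is honestly \emph{bimodal} rather than a unimodal, heavy-tailed blend. For an equal-weight mixture $\tfrac12\mathcal{N}(-\nu,\tau^2)+\tfrac12\mathcal{N}(\nu,\tau^2)$ the classical bimodality criterion is $\nu>\tau$ (equivalently $|\mu_1-\mu_2|>2\sigma$); with the sign-flip choice above this reads $|\mu_i|>\sigma_i$, which holds for the salient, low-dispersion channels that dominate binarization error but not for every channel, and it cannot be fixed by rescaling since that scales means and standard deviations together. To cover the general case one keeps $T$ non-diagonal and lets the two column profiles have different signal-to-noise ratios $c_j/\sqrt{d_j}$: this ratio is bounded by $\sqrt m$ (Cauchy--Schwarz, with near-equality when a block's column entries are nearly constant), so one block can be given a large $|c|/\sqrt d$ and the other a small or sign-opposite one, pushing the component means apart while keeping the variances comparable. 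The tension here is that making a block's columns collinear with the all-ones vector degrades the conditioning of $T$, so the construction must balance separation against invertibility—which is precisely why the theorem is an \emph{existence} statement and why LTDB searches over well-conditioned invertible $T$ to maximize inter-cluster separation; the appendix proof's job is to exhibit one such $T$ and verify both the separation inequality and $\det T\neq 0$.
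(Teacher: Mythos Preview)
Your approach is correct and genuinely different from—and considerably more rigorous than—the paper's. The appendix argument is not constructive: it notes that a linear image of a Gaussian row vector is again (multivariate) Gaussian, then posits that $T$ is \emph{learned} by minimizing the KL divergence $\mathrm{KL}\bigl(p(\boldsymbol{w}_i')\,\|\,\pi\mathcal{N}(\mu_1,\sigma_1^2)+(1-\pi)\mathcal{N}(\mu_2,\sigma_2^2)\bigr)$ between the empirical entry distribution of the transformed row and a target two-component GMM, and concludes existence from the mere possibility of running that optimization. No explicit $T$ is exhibited, no argument is given that the KL infimum is zero (or attained), and the bimodal-versus-unimodal-blend issue you flag is not addressed at all. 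By contrast, your diagonal two-value construction delivers the two-component mixture form directly, pins down the separation condition $|\mu_i|>\sigma_i$ as the residual obstacle for the sign-flip choice, and uses the Cauchy--Schwarz bound on $c_j/\sqrt{d_j}$ to explain why the general case calls for a non-diagonal (but still invertible) $T$. What the paper's route buys is a motivational bridge to the LTDB training loop, which does optimize a surrogate for this objective; what yours buys is an actual existence argument with an explicit witness, which is what the theorem statement asks for.
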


Theorem~\ref{theorem2} demonstrates that transforming weight distributions from single-bell to double-bell can be achieved by introducing an auxiliary matrix $T$. However, this approach presents several significant challenges. First, in LLMs, weight matrices typically have extremely high dimensionalities, such as $(4096,4096)$, meaning that the auxiliary matrix $T$ would also be of similarly large dimensions, making it computationally expensive and difficult to learn. Second, to maintain computational consistency, it is necessary to simultaneously apply $T^{-1}$
to the activations, which raises a critical issue regarding the invertibility of $T$. Ensuring strict invertibility introduces additional constraints and complex design steps, further complicating the process. Third, even if $T$ is strictly invertible, it remains uncertain whether this design effectively facilitates activation quantization, as there are no explicit mechanisms in the current approach to optimize activation quantization. These limitations highlight the need for a more efficient and robust design to address the computational and practical challenges associated with auxiliary matrix-based transformations.
 \vspace{-2mm}

\paragraph{Learnable Equivalent Transformation}

To address these challenges, we propose a simpler and more efficient method for achieving the transformation. In this approach, the matrix \(T \in \mathbb{R}^{1 \times C_{\text{in}}}\)  to be learned is reduced to a $1 \times C_{\text{in}}$ matrix. Compared to the matrix introduced above, this significantly reduces the dimensionality of \( T \) to compared to its original size, making it substantially easier to learn.

Furthermore, this approach allows for straightforward transformations to ensure computational consistency without introducing additional complexity as follows:
\begin{equation}
\small
\textbf{Y} = \textbf{X} * \boldsymbol{W} = \textbf{X} * ( T^{-1}* T) * \boldsymbol{W} =  \left(\textbf{X} \odot T^{-1}\right) * \left(T \odot \boldsymbol{W}\right)    
\end{equation}

where \(X \in \mathbb{R}^{N \times C_{\text{in}}}\) is the input matrix,  \(N\) is the token length and \(C_{\text{in}}\) is the input channel size. \(w \in \mathbb{R}^{C_{\text{in}} \times C_{\text{out}}}\) is the weight matrix, where \(C_{\text{out}}\) is the output channel size. \(\odot\) denotes elementwise multiplication.Moreover, this equivalent transformation matrix $T$ will be directly fused into the LayerNorm weights and the corresponding linear weights, without introducing any additional parameters. However, directly solving this matrix is highly challenging. To address this, we propose a learnable approach to train and derive the matrix effectively.

Here we introduce the way to initialize the learnable transformation matrix \( T \) using the following equation:

\begin{equation}
\small
T_j = \frac{\max(|\mathbf{X}_j|)^\epsilon}{\max(|\boldsymbol{W}_j|)^{1-\epsilon}}     
\end{equation}
\label{activation-aware}

where $\epsilon$ is a hyperparameter. This initialization strategy provides significant advantages for both weight binarization and activation smoothing. For weight quantization, specifically, when $\max(|\boldsymbol{W}_j|)$ is particularly small, it indicates that the absolute value of weights are relatively small, resulting in a large value of $\frac{1}{\max(|\boldsymbol{W}_j|)}$ which corresponds to scaling up these smaller weights. Conversely, when 
 $\max(|\boldsymbol{W}_j|)$ is particularly large, it reflects larger absolute value weights, which lead to a smaller value of $\frac{1}{\max(|\boldsymbol{W}_j|)}$, which will scale down the larger weights. All values can be shifted closer to two central points through these two processes, and it will reduce the quantization error shown in Appendix.~\ref{doubelbellanalysis}. Regarding activation quantization, the initialization explicitly accounts for outliers in the activation matrix, making it inherently activation-aware. This ensures that even without further optimization of activation quantization during subsequent training, it supports near 1-bit weight quantization while effectively reducing activations to a low bit-width. 
 \vspace{-2mm}
\subsection{Dual-Transformation Optimizing Objectives}
\label{sub322}
% Figure 3: a-b real; losscurve
\paragraph{Dual-Target Minimum Deviation Loss}

Our learning objective is to encourage all weight values to move closer to the two mean centers calculated by Eq.~\ref{quant2}, ultimately forming a double-bell distribution with these two values as its peaks. During the binarization process, we denote these two points as $m_1$ and $m_2$  respectively. The simple way to set loss function is as follows:

\begin{equation}
\small
\mathcal{L}_{\text{DTMD}} = \frac{\lambda_\text{DTMD}}{n} \sum_{i=1}^{n} \min\left( \left| \boldsymbol{W} * {T_i} - m_{1, i} \right|, \left|  \boldsymbol{W} * {T_i} - m_{2, i} \right| \right)
\end{equation}

where $\lambda_\text{DTMD}$ represents the coefficient of $\mathcal{L}_{\text{DTMD}}$. However, employing this type of loss function to train the transformation matrix introduces an issue. Specifically, we observed that the transformation matrix tends to shrink progressively during training, contrary to the intended effect of scaling up the originally smaller absolute values. This unintended behavior results in a significant problem: it effectively shifts the quantization challenge from the weights to the activations. 
 \vspace{-2mm}
\paragraph{Dual-Target Normalized Proportional Loss} 

As discussed before, DTMD alone is not enough to train a better weights distribution for binarization. So we introduce a new loss function as follows:
\begin{equation}
\tiny
\mathcal{L}_{\text{DTNP}} = \frac{\lambda_{DTNP}}{n} \sum_{i=1}^{n} 
\begin{cases} 
\frac{\left| \boldsymbol{W} * {T_i} - \text{m}_{1, i} \right|}{\left| \text{m}_{1, i} \right|}, & \text{if } \left| \boldsymbol{W}* {T_i} - \text{m}_{1, i} \right| < \left| \boldsymbol{W} * {T_i} - \text{m}_{2, i} \right| \\[10pt]
\frac{\left| \boldsymbol{W} * {T_i} - \text{m}_{2, i} \right|}{\left| \text{m}_{2, i} \right|}, & \text{otherwise.}
\end{cases} 
\label{DTNP}
\end{equation}

% \begin{wrapfigure}{r}{0.48\textwidth} % r 表示图片在右侧，宽度为 50%

%     \centering
%     \includegraphics[width=0.4\textwidth]{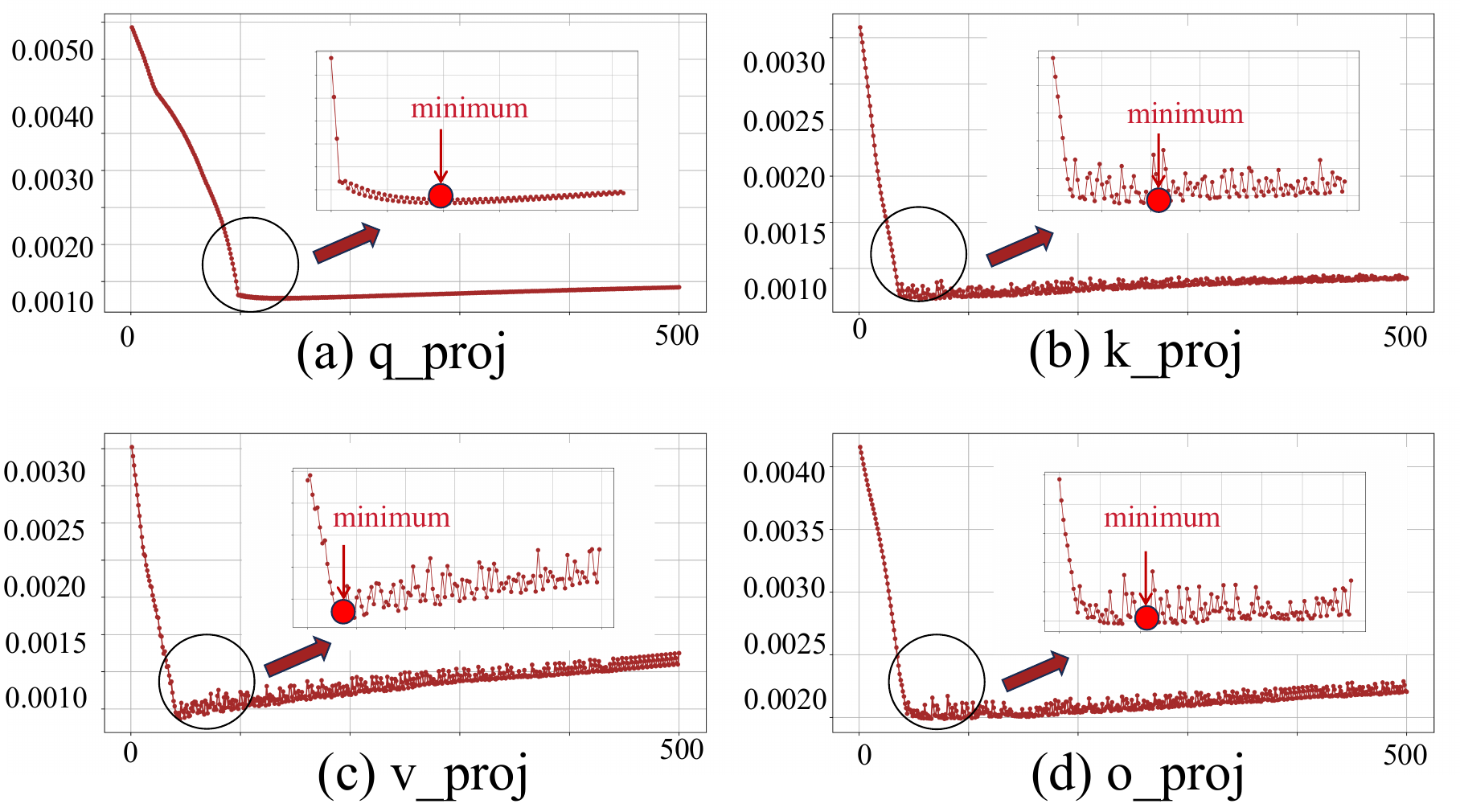} % 替换为实际图片路径
%     \vspace{-0.1in}
%     \caption{
%         Dual-Target Minimum Deviation Loss value over iterations across different layers. 
%     }
%     \label{figabsloss}
% \vspace{-0.4in}
% \end{wrapfigure}

\begin{figure} % r 表示图片在右侧，宽度为 50%

    \centering
    \includegraphics[width=0.45\textwidth]{fig/minimum.pdf} % 替换为实际图片路径

    \caption{
        Dual-Target Minimum Deviation Loss value over iterations across different layers. 
    }
    \label{figabsloss}
\vspace{-2mm}
\end{figure}
where $\lambda_{DTNP}$ represents the coefficient of $\text{Loss}_{\text{rel}}$. By leveraging the dual-target normalized proportional objective, the transformation matrix can be effectively trained to meet the desired behavior, scaling down larger absolute values and scaling up smaller absolute values to approach a double-bell-shaped distribution. Since our target values have been transformed into $\frac{\left| \boldsymbol{W} * {T_i} - \text{m}_{ i} \right|}{\left| \text{m}_{ i} \right|}$ , the final convergence values might not align with our desired results. Furthermore, we propose an early stopping mechanism to prevent the function from converging to an undesired solution that deviates from our intended objective. We observe that by using this loss to train, DTMD drops quickly first and then slowly grow as shown in Fig.~\ref{figabsloss}. Therefore, we introduce an early stop mechanism and DTMD is utilized as a condition for stopping the training.

\subsection{Impact of the Inverse of Learnable Transformation Matrix on Activation {Smoothing} } \label{sec:activation}

% Through the use of DTNP for training, we understand that the transformation matrix $T$ drives all values in the weights closer to the two mean centers calculated by Eq.~\ref{quant2}. Prior research has shown that the distribution of weights tends to approximate a quasi-Gaussian distribution, with the majority of values being extremely small and close to zero, while only a very small fraction exhibit relatively large absolute values~\citep{dettmers2022llmint88bitmatrixmultiplication}. Theoretically, this implies that $T$ will contain many values greater than 1 to amplify the numerous near-zero absolute values, bringing them closer to the mean centers. At the same time, very few values of $T$ will be less than 1. In fact, when we visualize $T$ after training as shown in Fig. \ref{figT}, we observe that less than 5\% of its values are below 1. Consequently, the corresponding $T^{-1}$, which is multiplied with activations, has over 95\% of its values below 1.  This effectively narrows the distribution of activation, making quantization easier. Visualization results can be seen in Appendix.~\ref{actquantresult} and we also provide quantitative analysis of activation outliers in Appendix.~\ref{relativeerror}. As a result, this approach smooths activation quantization.

% \subsection{Impact of Inverse Transformation on Activation Smoothing}
% \label{sec:act_smoothing}

Driven by the DTNP objective, the transformation matrix $T$ encourages weight values to cluster around the two binarization centers defined in Eq.~\ref{quant2}. Given that LLM weights typically follow a quasi-Gaussian distribution concentrated near zero~\citep{dettmers2022llmint88bitmatrixmultiplication}, $T$ tends to learn values greater than 1 to amplify these small magnitudes towards the binarization targets. Crucially, this mechanism inherently benefits activation quantization. Our activation-aware initialization (Eq.~\ref{activation-aware}) establishes a positive correlation between the activation magnitude $\max(|X_j|)$ and the transformation scale $T_j$. Consequently, channels exhibiting significant activation outliers are assigned larger $T_j$ values. When the inverse transformation $T^{-1}$ is applied to the input $X$, these outlier-heavy channels undergo stronger suppression (multiplication by a smaller $1/T_j$ factor) compared to normal channels. Empirically, as visualized in Fig.~\ref{figT}, we observe that less than 5\% of values in $T$ are below 1, implying that $T^{-1}$ acts as a global compressor with channel-specific intensity. This effectively narrows the dynamic range of activations and mitigates the kurtosis caused by outliers. We provide a quantitative validation of this smoothing effect in Appendix~\ref{relativeerror}, where metrics including Z-score and Relative Deviation Error demonstrate a substantial reduction in distribution variance and outlier impact, making the transformed activations significantly more amenable to low-bit quantization.

\vspace{-0.3cm}
\subsection{Algorithm}
\vspace{-2mm}
The process of  Learnable Transformation for Dual-Bell Transformation Algorithm  is shown in Algorithm.~\ref{alg:LTDB}, which can be seen in Appendix.~\ref{algorithmappendix} and can adjust the full-precision weight matrix $W$ using an activation-aware initialization transformation matrix $T$ over $N$ epochs. The algorithm iteratively minimizes dual-target loss functions to guide $W$ toward a dual-bell distribution while employing an early stopping mechanism based on the DTMD. 
\label{sub323}

\section{Experiments}
\label{sec_experiments}

\subsection{Settings}
All experimental procedures were executed utilizing the PyTorch \citep{paszke2019huggingface} framework in conjunction with the Huggingface library \citep{paszke2019huggingface}. Models with parameters smaller than 8B are running on a single NVIDIA A30 GPU equipped with 24GB of memory, others are running on a single NVIDIA A100 GPU equipped with 80GB of memory. Consistent with methodologies outlined by Frantar et al.~\citep{frantar2023gptq} and Huang et al.~\citep{huang2024billm}, a calibration dataset comprising 128 samples sourced from the C4 collection~\citep{raffel2020exploring} was employed.
\vspace{-2mm}

\begin{figure}% r 表示图片在右侧，宽度为 50%

    \centering
    \includegraphics[width=0.45\textwidth]{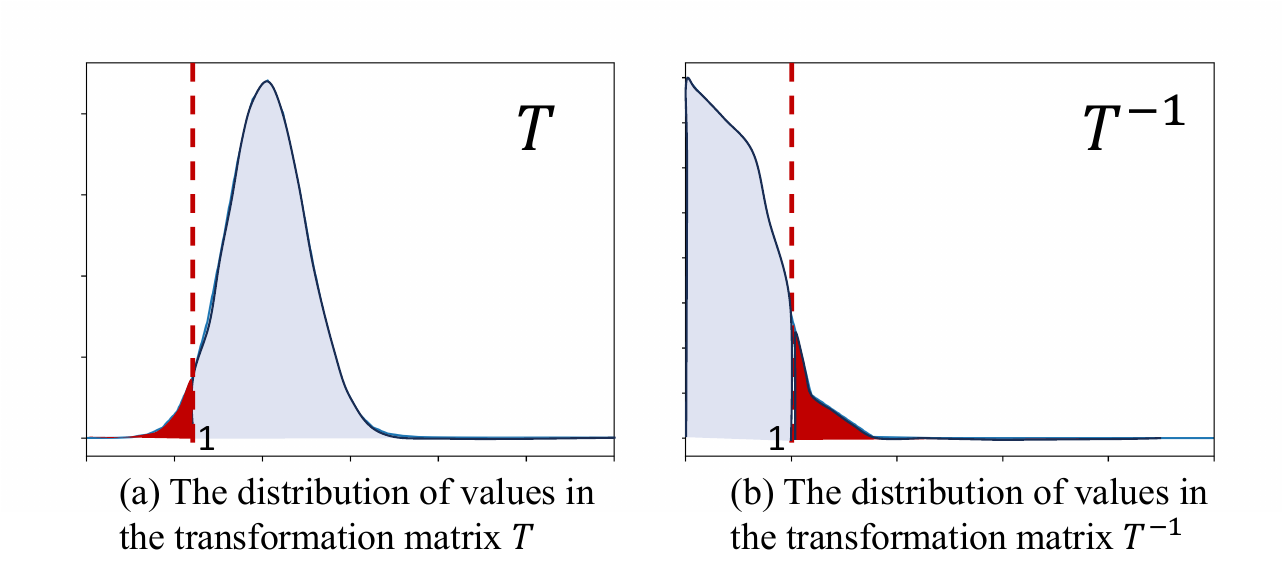} % 替换为实际图片路径

    \caption{
        Visualization of distribution of values in $T$ and $T^{-1}$ of Llama2-7B. 
    }
    \label{figT}

\end{figure}

\paragraph{Models and Datasets}
Comprehensive evaluations were carried out across several large language model families, including LLaMA, LLaMA-2, and LLaMA-3~\citep{touvron2023llama}, OPT series~\citep{zhang2022opt}, Qwen series~\citep{qwen2025qwen25technicalreport} and Deepseek Model~\citep{Guo_2025}. The efficacy of the developed DBellQuant was assessed by calculating the perplexity of the models' generated text on standard benchmarks: WikiText2~\citep{merity2017pointer}, and a subset of the C4 data~\citep{raffel2020exploring}. Furthermore, the models' performance was evaluated based on accuracy across five zero-shot question-answering tasks: ARC-c~\citep{clark2018think}, ARC-e~\citep{clark2018think},  Hellaswag~\citep{zellers2019hellaswag},  PIQA~\citep{bisk2020piqa}, and Winogrande~\citep{sakaguchi2020winogrande}.
\vspace{-2mm}

\paragraph{Comparison Methods}
The primary benchmark for comparison for our DBellQuant approach is BiLLM~\citep{huang2024billm}, which represents the current baseline PTQ technique for binary large language models. Additionally, we include other contemporary PTQ algorithms in our comparison, namely Round-to-Nearest (RTN), GPTQ~\citep{frantar2023gptq}, PB-LLM~\citep{shang2023pb} and the current state-of-the-art PTQ technique ARB-LLM~\citep{li2024arbllmalternatingrefinedbinarizations}.
 \vspace{-8mm}

 \begin{figure}[t]

\centering
\resizebox{0.9\linewidth}{!}{%
\begin{tabular}{l@{\hskip 12pt}c@{\hskip 12pt}c@{\hskip 12pt}c}
				\toprule
				\rowcolor{lightmauve}  \textbf{Bit-width} &\makecell{\textbf{LTDB} \\ \textbf{Algorithm}}  &\textbf{WikiText2 $\downarrow$} &\textbf{C4 $\downarrow$}   \\
				\midrule

				16   & \ding{55} & 24.41& 27.69  \\
                16   & \ding{51} & \textbf{17.91} & \textbf{21.83}  \\
                       \cdashline{1-4} \addlinespace[0.2em]
				8   & \ding{55} & 27.39& 30.91  \\
                8   & \ding{51} & \textbf{18.65} & \textbf{23.80}  \\
                       \cdashline{1-4} \addlinespace[0.2em]
                6   & \ding{55} & 32.74& 42.48  \\
                6   & \ding{51} & \textbf{21.69} & \textbf{30.24}  \\
                        \cdashline{1-4} \addlinespace[0.2em]
                4   & \ding{55} & 34.62& 45.93  \\
                4   & \ding{51} & \textbf{23.04} & \textbf{29.90}  \\
    				\bottomrule
	\end{tabular}
}
\vspace{1pt}
\captionof{table}{Performance w/o LTDB Algorithm.}
\label{ltdb}
 \vspace{-8mm}
\end{figure}

\begin{figure}[t]

\centering

\resizebox{0.9\linewidth}{!}{%
\begin{tabular}{l@{\hskip 12pt}c@{\hskip 12pt}c}
\toprule
\rowcolor{lightmauve} \textbf{Method} & \textbf{MathQA}  & \textbf{LogiQA2} \\
\midrule
\textbf{fp16} & 44.96 $\pm$ 0.91 & 29.58 $\pm$ 1.15 \\
\textbf{GPTQ 2bit} & 20.51 $\pm$ 0.82 & 14.65 $\pm$ 1.53 \\
\cdashline{1-3}
\addlinespace[0.2em]
\textbf{BiLLM 1.1bit} & 24.99 $\pm$ 0.79 & 20.42 $\pm$ 1.87 \\
\textbf{DBellQuant 1.1bit} & \textbf{27.65 $\pm$ 0.75} & \textbf{22.78 $\pm$ 1.48} \\

\bottomrule
\end{tabular}
}

  % 使用 captionof 命令
\captionof{table}{DeepSeek-R1-Distill-Qwen-7B results.}
  
  % 标签也应放在作用域内
\label{deepseek}
 \vspace{-4mm}
\end{figure}

\begin{figure}[t]

\centering

\resizebox{0.9\linewidth}{!}{

\begin{tabular}{l@{\hskip 12pt}c@{\hskip 12pt}c@{\hskip 12pt}c}
\toprule
\rowcolor{lightmauve} \textbf{Method} & \textbf{TextVQA}  & \textbf{ChartQA} & \textbf{MME-Per} \\
\midrule
\textbf{fp16} & 84.9 & 87.3 & 1695\\
\textbf{GPTQ 2bit} & 0 & 0 & 319 \\
\cdashline{1-4}
\addlinespace[0.2em]
\textbf{BiLLM 1.1bit} & 26.3 & 3.7 & 638 \\
\textbf{DBellQuant 1.1bit} & \textbf{28.4} & \textbf{5.2} & \textbf{742}\\
\bottomrule
\end{tabular}}
\captionof{table}{Qwen-2.5-VL-7B results.}
\label{qwen25vl}

\vspace{-8mm}
\end{figure}

\begin{table*}[!t]
\centering
\caption{
Perplexity of RTN, GPTQ, PB-LLM, BiLLM, ARB-LLM$_\text{X}$ and our methods on \textbf{OPT} and \textbf{LLaMA} family. The columns represent the perplexity results on \textbf{WikiText2} datasets with different model sizes. 
}
\label{wiki}

\begin{threeparttable}
\resizebox{\textwidth}{!}{ 
\begin{tabular}{lcccrrrrr} % Center alignment for all numeric columns
\toprule
% \rowcolor{gray!20}
\rowcolor{lightmauve}
\textbf{Method}   & \makecell{\textbf{Activation} \\ \textbf{Bits}} & \textbf{OPT-1.3B} & \textbf{OPT-2.7B} & \textbf{OPT-6.7B} & \textbf{LLaMA-1-7B} & \textbf{LLaMA-2-7B} &\textbf{LLaMA-2-13B}&\textbf{LLaMA-2-70B}\\
\midrule

Full Precision  & 16 & 14.62 & 12.47 & 10.86 & 5.68 & \phantom{0}5.47 & \phantom{0}4.88 &3.32   \\
\cmidrule{1-9}
 
\addlinespace[0.2em]
RTN & 16  &17165.72 & 36516.69 & 11550.91 & 168388.00 & 157058.34 & 47902.32 & 160389.91 \\
GPTQ & 16 & 14844.73& 14114.58 & 10622.81 & 267001.72 & 115905.67 & 9387.80 & 14219.35\\
PB-LLM & 16  & 265.52 & 124.35 & 105.16 & 102.36 & 69.20 & 151.09 & 28.37   \\
BiLLM  & 16 & 69.97 & 49.55 & 35.36 & 35.04 & 32.48 & 21.35 & 13.32\\

ARB-LLM$_\text{X}$ & 16 & 45.40 & 34.37 & 20.07 & 21.81 & 21.61 & 14.86 & 7.88 \\
\cdashline{1-9} 
\addlinespace[0.2em]
\textbf{DBellQuant} & 16 & \textbf{43.42} & \textbf{31.47} & \textbf{18.89} & \textbf{15.34} & \textbf{17.91} & \textbf{12.79} & \textbf{6.84}\\

\cmidrule{1-9}
BiLLM & 8   & 88.95 & 68.60 & 166.46 & 40.13 & 33.23 & 22.55 & 14.72\\
\cdashline{1-9} 
\addlinespace[0.2em]
\textbf{DBellQuant} & 8  & \textbf{44.98} & \textbf{30.39} & \textbf{18.88} & \textbf{14.74} & \textbf{18.65} & \textbf{13.11} & \textbf{6.88} \\
\cmidrule{1-9}
BiLLM & 6  & 9537 & 18405.85 & 28123.58 & 71.65 & 42.41 & 30.20 & 18.65\\
\cdashline{1-9} 
\addlinespace[0.2em]
\textbf{DBellQuant}  & 6  & \textbf{61.50} & \textbf{47.33} & \textbf{21.12} & \textbf{16.66} & \textbf{21.69} & \textbf{14.39} & \textbf{7.56} \\

\bottomrule
\end{tabular}
} 

\end{threeparttable}
\vspace{-4mm}
\end{table*}

% \begin{figure}[!t]
%     \centerline{\includegraphics[width=1.0\linewidth]{}}

%     \caption{
%         \textbf{Top:} Visualization of single-bell weights distribution from different blocks of different layers before applying DBellQuant. 
%         \textbf{Bottom:} Visualization of dual-bell weights distribution from different blocks of different layers after applying DBellQuant. 
%     }

%     \label{fig5}
%      \vspace{-4mm}
% \end{figure}

 % \vspace{-4mm}
\subsection{Main Results}

LLMs across different activation quantization bit-widths and model sizes, deploying DBellQuant with a block size of 128. As shown in Tab.~\ref{wiki}, we compare the WikiText2 perplexity of the OPT and Llama families across different model sizes. The results demonstrate that DBellQuant significantly outperforms the state-of-the-art ARB-LLM$_\text{X}$ when only quantizing weights, achieving up to a 42.18\% reduction in perplexity. Moreover, when activations are quantized to lower bit-widths like 6-bit, DBellQuant achieves up to a 76.66\% reduction in perplexity for the LLaMA family compared to BiLLM. It is noteworthy that for OPT family, the model outputs under the BiLLM methods have already collapsed when quantizing the activation to 6 bit, whereas DBellQuant still maintains reasonable linguistic output capabilities. In terms of average accuracy on QA datasets, DBellQuant also significantly surpasses previous methods and increase the average accuracy up to 42.48\%, as detailed in Tab.~\ref{qa}. Here, we only compare with ARB-LLM$_\text{X}$ because we share the same quantization format. Additionally, Fig.~\ref{fig5} visualizes the transformation of weight distributions across different layers, clearly illustrating the shift from a single-bell to a double-bell distribution. These results highlight the effectiveness of DBellQuant in enabling robust low-bit quantization while preserving model performance. More results can be seen in Appendix.~\ref{c4results}.
\vspace{-2mm}

\begin{figure}[t]
\begin{minipage}{0.49\linewidth}
\centering
\resizebox{\linewidth}{!}{%
\begin{tabular}{l@{\hskip 12pt}c@{\hskip 12pt}c}
\toprule
\rowcolor{lightmauve} \textbf{Method} & \textbf{wikitext2}  & \textbf{C4} \\
\midrule
\textbf{BiLLM} & 38.42 & 40.81 \\
\textbf{DBellQuant} & \textbf{27.05} & \textbf{31.02} \\
\bottomrule
\end{tabular}
}

\captionof{table}{Qwen-2-7B results.}
\label{qwen2}
\end{minipage}
\begin{minipage}{0.49\linewidth} 

\resizebox{\linewidth}{!}{

\begin{tabular}{l@{\hskip 12pt}c@{\hskip 12pt}c}
\toprule
\rowcolor{lightmauve} \textbf{Method} & \textbf{wikitext2}  & \textbf{C4} \\
\midrule
\textbf{BiLLM} & 41.74 & 53.07 \\
\textbf{DBellQuant} & \textbf{26.78} & \textbf{30.76} \\
\bottomrule
\end{tabular}}
\captionof{table}{Qwen-2.5-7B results.}
\label{qwen25}
\end{minipage}

\vspace{-4mm}
\end{figure}

\begin{figure}[t]

\centering
\resizebox{\linewidth}{!}{%
\begin{tabular}{lccc}
\toprule
\rowcolor{lightmauve}\textbf{Model} & \textbf{4096×4096} & \textbf{4096×11008} & \textbf{11008×4096} \\
\midrule
FP16        & 0.79463 & 1.73942 & 1.82653 \\
BiLLM       & 0.36842 & 0.38744 & 0.43906 \\
ARB-LLMX    & 0.33180 & 0.35539 & 0.36792 \\
\cdashline{1-4}
\addlinespace[0.2em]
\textbf{DBellQuant}  & \textbf{0.27694} & \textbf{0.30085} & \textbf{0.31860} \\
\bottomrule
\end{tabular}}
\captionof{table}{Actual inference speed comparison on various linear layers in LLaMA-2-7B.}
\label{speed1}

\vspace{-8mm}

\end{figure}

\begin{table*}[!t]
\centering
% \footnotesize
\caption{
Accuracy of PIQA, ARC-e, ARC-c, HellaSwag, Winogrande and average accuracy of all datasets with BiLLM and our methods on OPT and LLaMA family. Values marked with * indicate usage of NVFP4 format, while other values are represented in integer format.
}
\label{qa}

\begin{threeparttable}
\resizebox{\textwidth}{!}{\tiny
\begin{tabular}{llccrrrrr} % Center alignment for all numeric columns
\toprule
% \rowcolor{gray!20}
\rowcolor{lightmauve}
\textbf{Model} & \textbf{Method} & \makecell{\textbf{Activation} \\ \textbf{Bits}} & \textbf{PIQA} & \textbf{ARC-e} & \textbf{Arc-c}  & \textbf{HellaSwag} & \textbf{Winogrande} & \textbf{Avg.}  \\
\midrule
%opt6.7b
& -   & 16 & 76.33 & 65.61 & 30.55 & 50.51 & 65.35 & 57.67\\
\cmidrule{2-9}
& BiLLM & 16 & 59.63 & 36.83 & 17.06 & 30.14 & 51.30 & 38.99 \\
& ARB-LLM$_\text{X}$ & 16 & 69.75 & 55.47 & 24.32 & 37.78 & 58.64 & 49.19 \\
& \textbf{DBellQuant} & 16 & \textbf{70.29} & \textbf{55.76} & \textbf{24.72} & \textbf{37.81} & \textbf{58.71}  & \textbf{49.46}\\
\cdashline{2-9} 
\addlinespace[0.2em]
\textbf{OPT-6.7B} & BiLLM & 8 & 54.30 & 31.02 & 20.05 & 26.66 & 50.90 & 36.59 \\
& \textbf{DBellQuant} & 8 & \textbf{69.10} & \textbf{54.63} & \textbf{24.91} & \textbf{38.19} & \textbf{57.38}  & \textbf{48.84}\\
\cdashline{2-9} 
\addlinespace[0.2em]
& BiLLM & 6 & 53.43 & 20.08 & 20.22 & 25.80 & 47.67& 33.44 \\
& \textbf{DBellQuant} & 6 & \textbf{68.12} & \textbf{52.44} & \textbf{23.38} & \textbf{37.04} & \textbf{57.30}  & \textbf{47.65}\\
\cdashline{2-9} 
\addlinespace[0.2em]
& \textbf{DBellQuant} & 4* & \textbf{70.24} & \textbf{55.72} & \textbf{23.37} & \textbf{38.01} & \textbf{53.43}  & \textbf{48.81}\\

%llama1
\cmidrule{1-9}
& -   & 16 & 78.40 & 67.34 & 38.14 & 56.45 & 67.01 & 61.46\\
\cmidrule{2-9}
& BiLLM & 16 & 61.92 & 38.93 & 21.58 & 32.78 & 53.67 & 41.77 \\
& ARB-LLM$_\text{X}$ & 16 & 67.23 & 49.13 & 23.98 & 39.21 & 58.69 & 47.64 \\
& \textbf{DBellQuant} & 16 & \textbf{67.74} & \textbf{49.37} & \textbf{24.23} & \textbf{39.55} & \textbf{58.80}  & \textbf{47.94}\\
\cdashline{2-9} 
\addlinespace[0.2em]
\textbf{LLaMA-1-7B} & BiLLM & 8 & 61.86 & 37.88 & 21.76 & 32.09 & 51.62 & 41.04 \\
& \textbf{DBellQuant} & 8 & \textbf{67.41} & \textbf{47.31} & \textbf{26.19} & \textbf{38.89} & \textbf{58.80}  & \textbf{47.72}\\
\cdashline{2-9} 
\addlinespace[0.2em]
& BiLLM & 6 & 57.45 & 31.06 & 20.65 & 29.78 & 53.04 & 38.40   \\
& \textbf{DBellQuant} & 6 & \textbf{65.29} & \textbf{46.46} & \textbf{25.77} & \textbf{37.28} & \textbf{54.85}  & \textbf{45.94}\\
\cdashline{2-9} 
\addlinespace[0.2em]
& \textbf{DBellQuant} & 4* & \textbf{62.91} & \textbf{40.22} & \textbf{23.24} & \textbf{32.76} & \textbf{52.48}  & \textbf{42.32}\\
%llama2
\cmidrule{1-9}
& -   & 16 & 78.40 & 69.28 & 40.02 & 56.69 & 67.25 & 62.32\\
\cmidrule{2-9}
& BiLLM & 16 & 60.34 & 36.87 & 21.59 & 30.24 & 51.62 & 40.13 \\
& ARB-LLM$_\text{X}$ & 16 & 63.33 & 42.35 & 21.25 & 34.51 & 55.56 & 43.4 \\
& \textbf{DBellQuant} & 16 & \textbf{63.98} & \textbf{42.85} & \textbf{23.89} & \textbf{34.82} & \textbf{56.27}  & \textbf{44.36}\\
\cdashline{2-9} 
\addlinespace[0.2em]
\textbf{LLaMA-2-7B} & BiLLM & 8 & 59.74 & 36.95 & 21.42 & 30.96 & 53.75 & 40.56 \\
& \textbf{DBellQuant} & 8 & \textbf{62.56} & \textbf{42.42} & \textbf{23.03} & \textbf{34.44} & \textbf{54.38}  & \textbf{43.37}\\
\cdashline{2-9} 
\addlinespace[0.2em]
& BiLLM & 6 & 56.81 & 28.32 & 20.05 & 29.33 & 52.17 & 37.33 \\
& \textbf{DBellQuant} & 6 & \textbf{61.10} & \textbf{37.50} & \textbf{22.18} & \textbf{31.94} & \textbf{53.85}  & \textbf{41.32}\\
\cdashline{2-9} 
\addlinespace[0.2em]
& \textbf{DBellQuant} & 4* & \textbf{60.82} & \textbf{38.35} & \textbf{22.69} & \textbf{32.76} & \textbf{53.35}  & \textbf{41.59}\\

%llama3
\cmidrule{1-9}
& -   & 16 &  79.65 & 80.09 & 50.51 & 60.18 & 72.77 & 68.64 \\
\cmidrule{2-9}

& BiLLM & 16 & 57.51 & 33.75 & 18.52 & 31.63 & 53.12 & 38.90 \\
& ARB-LLM$_\text{X}$ & 16 & 63.73 & 42.75 & 21.79 & 34.41 & 55.89 & 43.71 \\
& \textbf{DBellQuant} & 16 & \textbf{64.15} & \textbf{44.11} & \textbf{22.02} & \textbf{34.73} & \textbf{56.36}  & \textbf{ 44.27}\\
\cdashline{2-9} 
\addlinespace[0.2em]
\textbf{LLaMA-3-8B} & BiLLM & 8 & 60.55 & 37.96 & 18.34 & 32.60 & 51.78 & 40.24 \\
& \textbf{DBellQuant} & 8 & \textbf{61.70} & \textbf{40.95} & \textbf{18.40} & \textbf{32.95} & \textbf{55.09}  & \textbf{41.82}\\
\cdashline{2-9} 
\addlinespace[0.2em]
& BiLLM & 6 & 56.09 & 33.92 & 16.72 & 31.75 & 51.85 & 38.06 \\
& \textbf{DBellQuant} & 6 & \textbf{58.00} & \textbf{37.63} & \textbf{18.77} & \textbf{31.83} & \textbf{51.92}  & \textbf{39.64}\\
\cdashline{2-9} 
\addlinespace[0.2em]
& \textbf{DBellQuant} & 4* & \textbf{60.07} & \textbf{40.40} & \textbf{19.89} & \textbf{31.92} & \textbf{53.43}  & \textbf{41.14}\\

\bottomrule
\end{tabular}
}

\end{threeparttable}
\vspace{-2mm}
\end{table*}

\subsection{Ablation Experiments}

\paragraph{Effectiveness of LTDB algorithm} 
To validate the effectiveness of our advanced LTDB algorithm, we compare them with the vanilla initialization without LTDB. We can observe that under different bitwidths of activation quantization, the models utilizing the proposed LTDB method consistently outperform those relying on vanilla initialization as shown in Tab.~\ref{ltdb}. This demonstrates that the proposed LTDB method can effectively reduce quantization errors.

\vspace{-3mm}
\paragraph{Optimization Objective}
% In the definition of DTNP, we employed the L1 loss as described in Eq.~\ref{DTNP}. To evaluate the effectiveness of the L1 loss, we tested by replacing it with L2 loss and do the same experiments. We observed that the overall performance was inferior to that achieved by L1 loss. Therefore, we selected L1 loss as the loss function for DTNP. Additionally, during the activation-aware initialization process, the parameter $\epsilon$ is introduced as a hyperparameter. We conducted experiments with different values of $\epsilon$, and the results demonstrated that all tested values of $\epsilon$ consistently outperformed prior algorithms. For LLaMA-2-7B, setting $\epsilon$ to 0.85 yielded the best performance. Results are shown in Tab.~\ref{tablecombine}.

In DTNP, we adopted L1 loss as the objective function, as it consistently outperformed L2 loss in our experiments. Additionally, we introduced the hyperparameter $\epsilon$ during the activation-aware initialization process and found that all tested values surpassed prior algorithms, with $\epsilon = 0.85$ achieving the best performance for LLaMA-2-7B as shown in Tab.~\ref{lossfunc} and Tab.~\ref{epsilon}. Detailed parameter settings are provided in the Appendix.~\ref{hyperparameter}.
\vspace{-2mm}

\paragraph{Impact of Block Size}
% We investigated the impact of block size on the quantization performance of DBellQuant by experimenting with block sizes ranging from 64 to 256 columns. The results are shown in Tab.~\ref{blocksize}. Consistent with other PTQ methods, smaller block sizes yield lower perplexity, but it will increase the diversity of quantization results also increase the weighting overhead. In all experiments shown before we set the block size to 128 because we think it can better balance the bit-width and quantization effect. Furthermore, our method consistently outperforms the current baseline approaches across various block sizes, demonstrating its robustness and generalizability under different configurations.

We evaluated the impact of block size on DBellQuant's quantization performance using block sizes ranging from 64 to 256 columns, as shown in Tab.~\ref{blocksize}. Similar to other PTQ methods, smaller block sizes achieve lower perplexity but increase quantization diversity and weighting overhead. In previous experiments, we used a block size of 128, as it balances bit-width and quantization performance effectively. Notably, our method consistently outperforms baseline approaches across all block sizes, highlighting its robustness and generalizability.
\vspace{-2mm}
% \begin{figure}[!t]
%     \centerline{\includegraphics[width=1.0\linewidth]{fig/weightsnew.jpg}}

%     \caption{
%         \textbf{Top:} Visualization of single-bell weights distribution from different blocks of different layers before applying DBellQuant. 
%         \textbf{Bottom:} Visualization of dual-bell weights distribution from different blocks of different layers after applying DBellQuant. 
%     }

%     \label{fig5}
% \end{figure}

\paragraph{Activation Bit-width Comparisons}

We compared model performance across different activation bit-widths, as shown in Fig.~\ref{bitwidth}. When activations are quantized to 8 bits, perplexity remains nearly unchanged or even improves. Even with lower-bit quantization, such as 6-bit, the performance remains largely intact, with minimal perplexity increases. Notably, we also conducted experiments with NVFP4 format 4-bit activation quantization~\citep{chen2025int} in Tab.~\ref{qa} and Tab.~\ref{4bitc4}, we can see the model can still retain a significant level of performance. In most cases, it even outperforms 16-bit BiLLM. These results highlight the potential of our method in enabling efficient and reliable model performance under lower-bit settings.
\vspace{-4mm}

\paragraph{Implementation on Various Advanced Models}
To demonstrate the adaptability of DBellQuant beyond standard benchmarks, we extended our evaluation to the Qwen family, reasoning, and multi-modal models. As shown in Tab.\ref{qwen2} and Tab.\ref{qwen25}, DBellQuant consistently outperforms BiLLM on both Qwen-2-7B and Qwen-2.5-7B. Furthermore, we evaluated the reasoning model DeepSeek-R1-Distill-Qwen-7B (Tab.\ref{deepseek}) and the multi-modal Qwen-2.5-VL-7B (Tab.\ref{qwen25vl}). Despite the significant challenge of near 1-bit quantization causing noticeable performance drops, our method still outperforms BiLLM across MathQA, LogiQA2, and visual benchmarks, highlighting its robustness in ultra-low-bit scenarios.

\subsection{Time and Memory Analysis}

\paragraph{Time Comparison}
We clarify that the additional training step introduced by LDTB incurs minimal overhead while yielding substantial performance gains. Specifically, compared to BiLLM, optimizing the learnable T matrix for the LLaMA2-7B model requires only 12 additional minutes on an A100 80G GPU as shown in Tab.~\ref{time}, a negligible cost given the performance improvements achieved, reducing the perplexity on WikiText2 from 32.48 to 17.91. Moreover, compared to ARB-LLM$_\text{X}$, our approach not only achieves lower perplexity but also reduces training time. More results can be seen in Appendix.~\ref{traintime}. The actual inference speedup is shown in Tab.~\ref{speed1} and more details about implementation process are in Appendix.~\ref{speedup}.

\paragraph{Memory Comparison}
We include a detailed comparison of bitwidth and model size across different settings. Our method reduces the model size to approximately 1/7 to 1/6 of the original FP16 model, with only a minimal increase in perplexity. Compared to other ultra-low-bit PTQ methods such as BiLLM and PB-LLM, our approach achieves comparable bitwidth and model size, while yielding significantly lower perplexity as shown in Tab.~\ref{bit6}. More results can be seen in Appendix.~\ref{bit9}.

\section{Conclusion}

We propose DBellQuant, an efficient PTQ method enabling simultaneous weight binarization and activation quantization for LLMs. By leveraging weight distributions suited for binarization, we design the Learnable Transformation for Dual-Bell algorithm. This includes two customized loss functions and an early stopping mechanism to achieve the dual-bell transformation. This transformation enhances activation quantization at the same time, enabling near 1-bit weight compression and 6-bit or even 4-bit activation quantization with minimal performance loss—achieving this milestone for the first time in the PTQ framework. Experiments on open-source LLMs across different families and sizes demonstrate that DBellQuant significantly advances the performance of previous SOTA binary PTQ methods.

\section*{Impact Statement}
This paper enhances inference-time and memory efficiency for LLMs by introducing DBellQuant, a novel post-training quantization framework. DBellQuant significantly reduces computational and memory demands through nearly 1-bit weight compression and 6-bit or even 4-bit activation quantization while maintaining model performance. Its contributions improve the practicality and accessibility of deploying LLMs in resource-constrained environments, such as edge devices. The societal and ethical implications align with standard advancements in ML efficiency, promoting sustainability and usability without introducing new model capabilities, data sources, or application domains. As such, it raises no additional ethical concerns beyond those already associated with LLMs and presents no new broader impacts.

% In the unusual situation where you want a paper to appear in the
% references without citing it in the main text, use \nocite
\nocite{langley00}

\bibliography{icml_paper}
\bibliographystyle{icml2026}

%%%%%%%%%%%%%%%%%%%%%%%%%%%%%%%%%%%%%%%%%%%%%%%%%%%%%%%%%%%%%%%%%%%%%%%%%%%%%%%
%%%%%%%%%%%%%%%%%%%%%%%%%%%%%%%%%%%%%%%%%%%%%%%%%%%%%%%%%%%%%%%%%%%%%%%%%%%%%%%
% APPENDIX
%%%%%%%%%%%%%%%%%%%%%%%%%%%%%%%%%%%%%%%%%%%%%%%%%%%%%%%%%%%%%%%%%%%%%%%%%%%%%%%
%%%%%%%%%%%%%%%%%%%%%%%%%%%%%%%%%%%%%%%%%%%%%%%%%%%%%%%%%%%%%%%%%%%%%%%%%%%%%%%
\newpage
\clearpage
\appendix

\section{Appendix}

\subsection{Use of Large Language Models}
In preparing this manuscript, we exclusively used large language models—GPT-5~\citep{openai2025gpt5}—to refine grammar, flow, and tone at the sentence and paragraph levels. These tools were not used to generate ideas, design experiments, or draw conclusions. All technical content, methodologies, and interpretations were independently authored, rigorously verified, and approved by the authors. To prevent factual inaccuracies or citation errors, every model-edited sentence was reviewed by the authors, and all references were meticulously cross-checked with their original sources. The authors take full responsibility for the accuracy and integrity of this work.

\subsection{Limitations}
\label{limit}

Our current work supports quantizing activations to 6 bits and can handle certain tasks with 4-bit activation quantization while maintaining competitive performance. However, for more challenging tasks, the model may still encounter performance degradation, indicating the need for additional techniques to address these issues. Recent studies have not only demonstrated the feasibility of 4-bit activation quantization but also explored methods to mitigate its limitations. Inspired by these advancements, we aim to further optimize our models by combining low-bit quantization with other techniques.

In addition, sparse attention has recently gained significant attention, with numerous studies highlighting its potential to accelerate inference~\citep{wang2025viso, li2025efficient}. A key objective of our work is to explore the integration of sparse attention with lower-bit quantization techniques, enabling faster computation while maintaining model performance. This combination could open new possibilities for efficient and practical deployment in real-world scenarios.

\subsection{Broader Impacts}
\label{broad}

Our work demonstrates the feasibility of simultaneously binarizing weights and quantizing activations to 6 bits in LLMs while maintaining competitive performance. This approach significantly reduces the computational and memory overhead associated with LLM deployment, making them more accessible for resource-constrained environments. By enabling efficient inference~\citep{yang2024conditional}, our method contributes to the democratization of advanced AI technologies, reducing the environmental impact of large-scale model deployment. Furthermore, it opens new avenues for research in ultra-low-bit quantization, fostering innovation in model efficiency and scalability.

\subsection{Conceptual Comparison with Prior PTQ Methods}
\label{app:conceptual-comparison}

In this section, we clarify the conceptual differences between DBellQuant and several closely related post-training quantization methods, such as SmoothQuant~\citep{xiao2023smoothquant}, OmniQuant~\citep{shao2023omniquant}, and QuaRot~\citep{ashkboos2024quarot}. While all these methods can be interpreted as applying some form of pre/post-transform to weights and/or activations, their design goals, operating bit-widths, and transformation mechanisms are fundamentally different.

Compared to these prior approaches, DBellQuant is \emph{not} merely another form of scale redistribution:

\begin{itemize}[leftmargin=*]
    \item \textbf{Distribution-shaping for binary PTQ.}
    SmoothQuant and OmniQuant primarily act on the dynamic range of activations and weights, aiming to make low-bit quantization (typically 4--8 bits) more robust by redistributing scaling factors or applying nonlinear rescaling. In contrast, DBellQuant is designed specifically for \emph{near 1-bit} weight quantization. Our learnable per-channel transform $T$ is optimized with a dedicated objective (DTMD + DTNP) that \emph{explicitly drives each channel's weight distribution from a unimodal ``single-bell'' shape toward a bimodal ``double-bell'' shape} that is theoretically more compatible with sign-based binarization.
    
    \item \textbf{Coupled design of weight binarization and activation smoothing.}
    QuaRot applies orthogonal rotations (Hadamard + permutation) to reduce correlations and outliers in weights and activations, which is highly effective for low-bit fixed-point GEMM but is not specifically tailored to the peculiarities of sign-binarized weights. DBellQuant, on the other hand, jointly designs (i) a learnable per-channel scaling of weights $W \odot T$ and (ii) its inverse on activations $X \odot T^{-1}$ so that:
    \begin{enumerate}
        \item the transformed weights become bimodal and thus incur smaller sign-binarization error; and
        \item channels with large activation outliers are automatically assigned larger $T_j$, so that $T_j^{-1}$ \emph{suppresses} those outliers when applied to activations, making 6-bit activation quantization feasible under a binary weight regime.
    \end{enumerate}
    
    \item \textbf{Binary-centric loss design.}
    While existing methods typically optimize reconstruction error or rely on standard calibration losses, DBellQuant's LTDB algorithm is explicitly constructed around binary quantization. The Dual-Target Minimum Deviation (DTMD) and Dual-Target Normalized Proportional (DTNP) losses are defined with respect to the two binarization centers derived from the sign function, and the optimization of $T$ is directly driven by the goal of minimizing \emph{sign-binarization} error. In other words, both the transformation and its learning objective are co-designed for binary PTQ, rather than for generic low-bit quantization.
\end{itemize}

To summarize, although DBellQuant also employs per-channel transformations like SmoothQuant and interacts with activation outliers like QuaRot, its core contribution is conceptually different: we use a lightweight, equivalence-preserving, learnable transform to \emph{reshape} the full-precision weight distributions into a bimodal form that is inherently more compatible with sign-based binarization, while the inverse transform automatically smooths activation outliers. This binary-centric distribution-shaping is what enables DBellQuant to achieve near 1-bit weights together with 6-bit activations under a purely post-training setting.

\subsection{Effectiveness of DTMD and DTNP Loss  }

With DTMD loss, T tends to shrink toward zero, leading to a "loss hack" effect that prevents the desired transformation. To address this, we introduce the DTNP loss to ensure a smooth transition. However, using DTNP alone creates a new issue, as its convergence direction can conflict with DTMD. As shown in Tab.~\ref{lossfunc}, using DTNP alone drives the model toward a direction misaligned with DTMD, resulting in degraded performance.

\subsection{Hyperparameter setting in training}
\label{hyperparameter}

Our method involves a few key hyperparameters, including the loss coefficient 
, the number of training epochs, and the learning rate. These were initially determined by validating on the LLaMA2-7B model, where we used 100 for both loss coefficients, 200 epochs, and a learning rate of 0.01.

For the smooth parameter $\epsilon$  , we followed the strategy of SmoothQuant, testing values of 0.75, 0.80, 0.85, and 0.90. The differences in performance were marginal, with 0.85 yielding the best results, and thus selected as the default. The stopping criterion is based on LTDB loss mentioned in Section.~\ref{sub322}.

Importantly, we found that these hyperparameters generalize well across different architectures and scales—including both the LLaMA and OPT model families—without requiring re-tuning. This demonstrates that our method is robust and not sensitive to hyperparameter choices.

\subsection{Visualization of the transformation of weight distributions across different layers}

The visualization of weights distribution before and after our proposed method DBellQuant is shown in Fig.~\ref{fig5}.

\begin{figure*}[!t]
    \centerline{\includegraphics[width=1.0\linewidth]{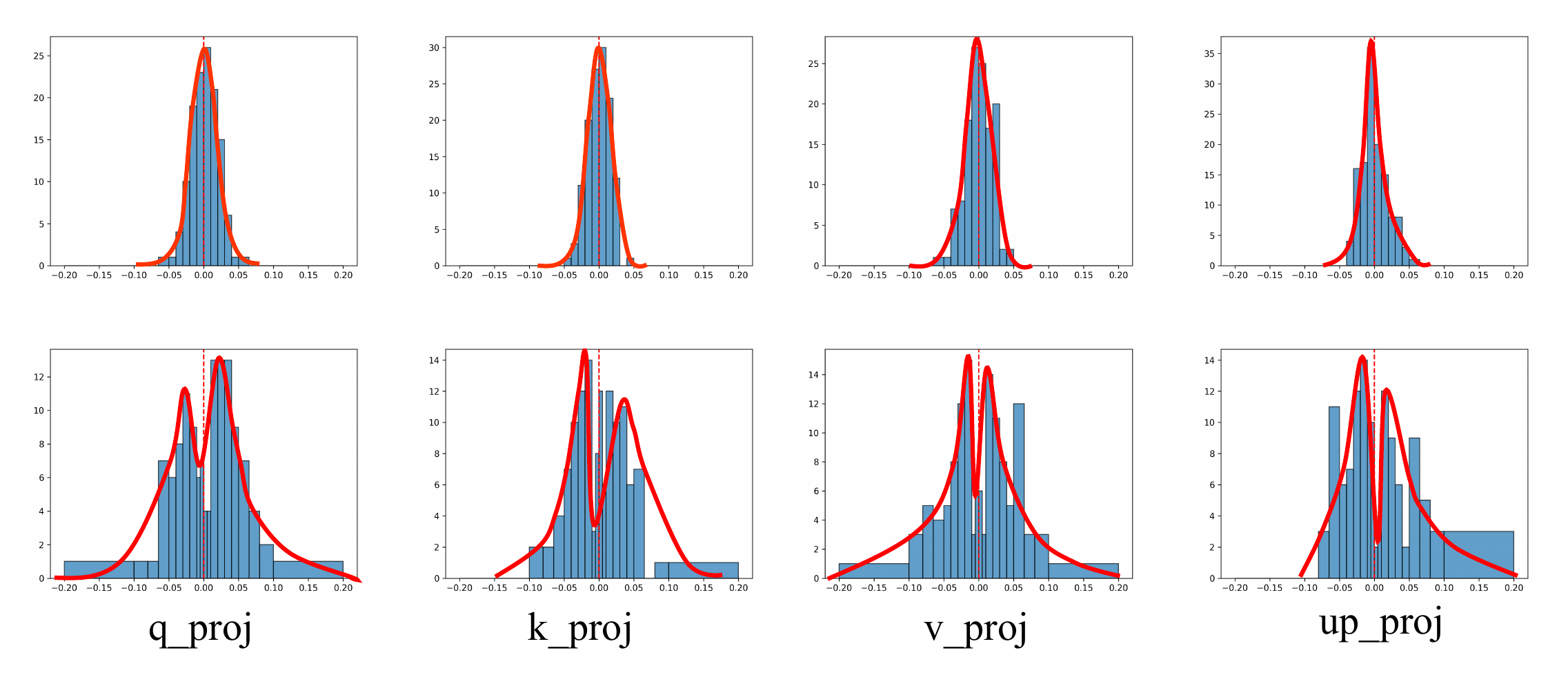}}

    \caption{
        \textbf{Top:} Visualization of single-bell weights distribution from different blocks of different layers before applying DBellQuant. 
        \textbf{Bottom:} Visualization of dual-bell weights distribution from different blocks of different layers after applying DBellQuant. 
    }

    \label{fig5}
\end{figure*}

\subsection{Detailed application of Transformation}

To preserve computational consistency and avoid introducing additional parameters, we apply the transformation $T$ between the LayerNorm and the q/k/v projection layers, as well as between the LayerNorm and the up/gate projection layers. This placement mirrors that of SmoothQuant, ensuring the transformed inputs are compatible with quantization while maintaining the model's architecture.

For $o\_proj$, although it is theoretically possible to insert a transformation $T$ between $v\_proj$ and $o\_proj$ to complete the symmetry, we found that this leads to conflicting transformations on $v\_proj$, degrading performance. Therefore, we do not apply any transformation to $o\_proj$ in practice.

Similarly, for $down\_proj$, the presence of activation functions (e.g., GeLU) makes it non-trivial to apply transformations without disrupting the computation graph or introducing inconsistencies. As a result, we do not insert transformations in this part either.

But we need to clarify that both $o\_proj$ and $down\_proj$'s weight are binarized in DBellQuant. To clarify, for these two layers, $o\_proj$ and $down\_proj$, we did not learn a T-transformation to modify their weights. However, we still applied the original binarization in BiLLM to process these two layers, making our approach a w1a6 method.

\subsection{Performance of different activation bit-widths}

The performance of different activation bit-widths across different models is shown in Fig.~\ref{bitwidth}.
% \begin{minipage}{0.55\linewidth}
%         \centering
% 		\includegraphics[width=1\columnwidth]{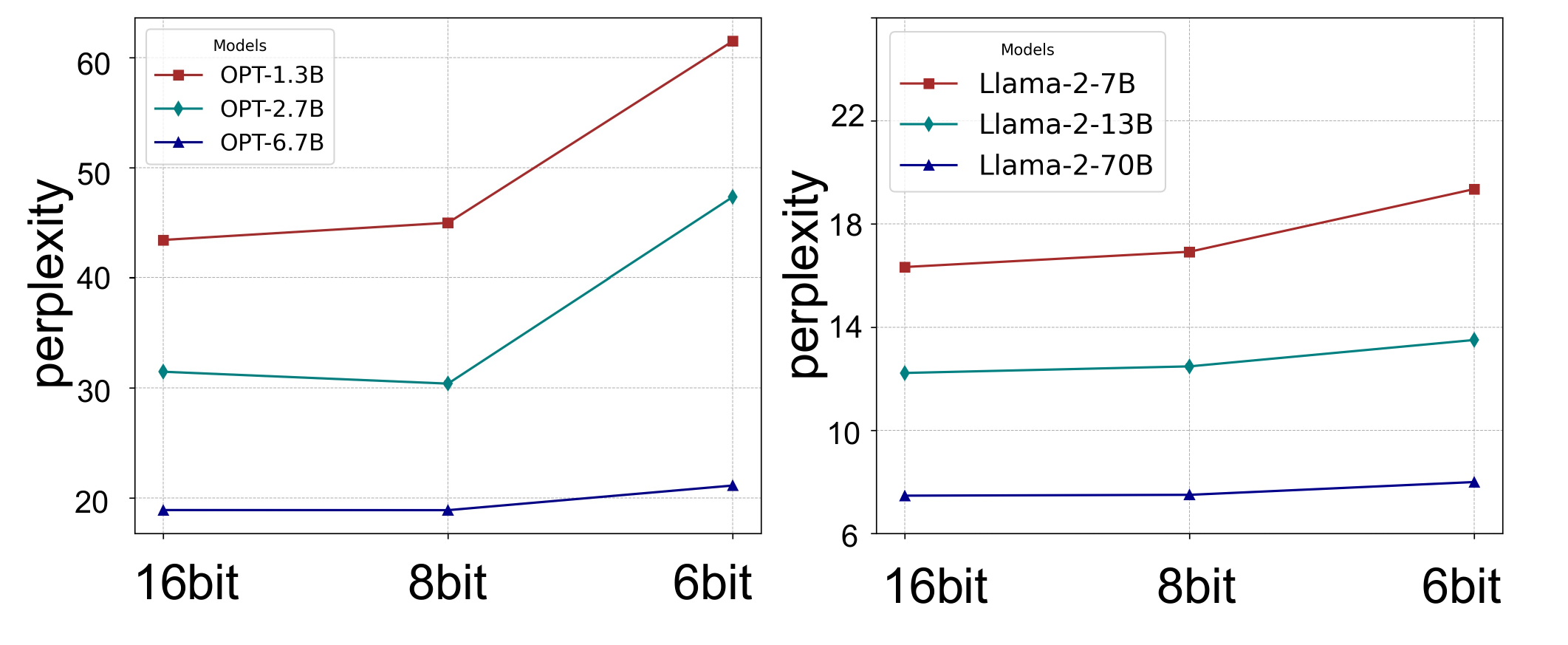}
% 		\caption{Performance of different activation bit-widths.}
% 		\label{Figactbit}
\begin{figure}[!t]
    \centerline{\includegraphics[width=1.0\linewidth]{fig/actbit1.pdf}}

    \caption{
       Performance of different activation bit-widths.}

    \label{bitwidth}
\end{figure}

\subsection{Results about 4-bit activation on Wikitext2 and C4. }
With the help of NVFP4 format on activation quantization, we have conducted experiments on Llama-2-7B to test the perplexity on Wikitext2 and C4 dataset, with nearly 1-bit weight and 4-bit activation. Results can be seen in Tab~\ref{4bitc4}.

% \begin{figure}[t]

% \centering

% \resizebox{\linewidth}{!}{%
% \begin{tabular}{l@{\hskip 12pt}c@{\hskip 12pt}c@{\hskip 12pt}c}
% \toprule
% \rowcolor{lightmauve} \textbf{Method} & \textbf{Bit_width} &\textbf{Wikitext2}  & \textbf{C4} \\
% \midrule
% \textbf{fp16} & 16 & 5.47 & 7.26 \\
% \textbf{BiLLM} & 16 & 32.48 &  39.38 \\
% \cdashline{1-4}
% \addlinespace[0.2em]
% \textbf{DBellQuant} & 8 & 18.65 & 23.80 \\
% \textbf{DBellQuant} & 6 & 21.69 & 30.24 \\
% \textbf{DBellQuant} & 4 & 23.04 & 29.90 \\

% \bottomrule
% \end{tabular}
% }

% \captionof{table}{DeepSeek-R1-Distill-Qwen-7B results.}
  
%   % 标签也应放在作用域内
% \label{4bitc4}

% \end{figure}

\begin{figure}[t]

\centering

\resizebox{\linewidth}{!}{%
\begin{tabular}{l@{\hskip 12pt}c@{\hskip 12pt}c@{\hskip 12pt}c}
\toprule
\rowcolor{lightmauve} \textbf{Method} & \textbf{Bit-width} &\textbf{Wikitext2}  & \textbf{C4}\\
\midrule
\textbf{fp16} & 16 & 5.47 & 7.26 \\
\textbf{BiLLM} & 16 & 32.48 &  39.38 \\
\cdashline{1-4}
\addlinespace[0.2em]
\textbf{DBellQuant} & 8 & 18.65 & 23.80 \\
\textbf{DBellQuant} & 6 & 21.69 & 30.24 \\
\textbf{DBellQuant} & 4 & 23.04 & 29.90 \\

\bottomrule
\end{tabular}
}

  % 使用 captionof 命令
\captionof{table}{4-bit activation quantization results.}
  
  % 标签也应放在作用域内
\label{4bitc4}

\end{figure}

\subsection{Results about training of different methods}
\label{traintime}

Detailed training time and perplexity of different methods across various models are shown in Tab.~\ref{time2}.

\begin{table*}[h!]
\centering
\caption{Training Time and Performance Comparison}
\begin{tabular}{lcccccccccc}
\toprule
\rowcolor{lightmauve}
          & \multicolumn{2}{c}{OPT-1.3B} & \multicolumn{2}{c}{OPT-2.3B} & \multicolumn{2}{c}{OPT-6.7B} & \multicolumn{2}{c}{LLaMA-1-7B} & \multicolumn{2}{c}{LLaMA-2-7B} \\
          & mins & ppl                  & mins & ppl                  & mins & ppl                  & mins & ppl                  & mins & ppl                  \\
\midrule
BiLLM     & 7    & 69.97                & 13   & 49.55                & 34   & 35.36                & 35   & 35.04                & 37   & 32.48                \\
\cdashline{1-11}
\addlinespace[0.2em]
ARB-LLMX  & --   & --                   & --   & --                   & --   & --                   & 88   & 21.81                & --   & --                   \\
\cdashline{1-11}
\addlinespace[0.2em]
DBellQuant & 9    & 43.42                & 17   & 31.47                & 45   & 18.89                & 47   & 15.34                & 49   & 17.91                \\
\bottomrule
\end{tabular}
\label{time2}
\end{table*}

\subsection{Results about bit-width, model size and perplexity of different methods}
\label{bit9}

Detailed bit-width, model size and perplexity of different methods across various models are shown in Tab.~\ref{opt67b}, Tab.~\ref{llama1_7b} and Tab.~\ref{llama2_7b}.

\begin{table}[h!]
\centering
\caption{Bit-width, Model Size and Performance Comparison for OPT-6.7B}
\begin{tabular}{lccc}
\toprule
\rowcolor{lightmauve}\textbf{Methods} & \textbf{bit-width} & \textbf{model size} & \textbf{ppl} \\
\midrule
FP16            & 16                 & 12.5GB             & 10.86        \\
RTN             & 2                  & 2.15GB             & 2e4          \\
GPTQ            & 2                  & 2.15GB             & 50.19        \\
PB-LLM          & 1.7                & 1.95GB             & 105.16       \\
BiLLM           & 1.11               & 1.84GB             & 35.36        \\
\cdashline{1-4}
\addlinespace[0.2em]
DBellQuant      & 1.18               & 1.89GB             & 18.89        \\
\bottomrule
\end{tabular}
\label{opt67b}
\end{table}

\begin{table}[h!]
\centering
\caption{Bit-width, Model Size and Performance Comparison for LLaMA-1-7B}
\begin{tabular}{lccc}
\toprule
\rowcolor{lightmauve}\textbf{Methods} & \textbf{bit-width} & \textbf{model size} & \textbf{ppl} \\
\midrule
FP16            & 16                 & 13.5GB             & 5.68         \\
RTN             & 2                  & 2.25GB             & 1e5          \\
GPTQ            & 2                  & 2.25GB             & 152.31       \\
PB-LLM          & 1.7                & 2.05GB             & 102.36       \\
BiLLM           & 1.09               & 1.97GB             & 35.04        \\
\cdashline{1-4}
\addlinespace[0.2em]
DBellQuant      & 1.15               & 2.02GB             & 15.34        \\
\bottomrule
\end{tabular}
\label{llama1_7b}
\end{table}

\begin{table}[h!]
\centering
\caption{Bit-width, Model Size and Performance Comparison for LLaMA-2-7B}
\begin{tabular}{lccc}
\toprule
\rowcolor{lightmauve}\textbf{Methods} & \textbf{bit-width} & \textbf{model size} & \textbf{ppl} \\
\midrule
FP16            & 16                 & 13.5GB             & 5.47         \\
RTN             & 2                  & 2.31GB             & 1e5          \\
GPTQ            & 2                  & 2.31GB             & 60.45        \\
PB-LLM          & 1.7                & 2.08GB             & 69.20        \\
BiLLM           & 1.08               & 1.98GB             & 32.48        \\
\cdashline{1-4}
\addlinespace[0.2em]
DBellQuant      & 1.15               & 2.04GB             & 17.91        \\
\bottomrule
\end{tabular}
\label{llama2_7b}
\end{table}

\subsection{Results about Computation Speedup}
\label{speedup}

To quantify computational speedup, we follow the benchmarking strategy of ARB-LLM and utilize the BitBLAS codebase, which supports mixed-precision GEMM operations for low-bit weights. We benchmark the latency (in milliseconds) of linear layers in LLaMA2-7B using input sequences of length 2048. The results are shown in Tab.~\ref{speed}. Here, we use INT8 format for activation quantization. Our findings are as follows:

1)Weight binarization significantly reduces inference latency compared to FP16 models, thanks to faster memory access and bitwise computation.

2)DBellQuant supports 8-bit and even 4-bit activation using NVFP4 format quantization, enabling efficient inference. This not only accelerates runtime but also yields better perplexity compared to prior binary weight methods like BiLLM and ARB-LLMX.

\begin{table}[h!]
\centering
\caption{Computation Speed Comparison}
\begin{tabular}{lccc}
\toprule
\rowcolor{lightmauve}\textbf{Model} & \textbf{4096×4096} & \textbf{4096×11008} & \textbf{11008×4096} \\
\midrule
FP16        & 0.79463 & 1.73942 & 1.82653 \\
BiLLM       & 0.36842 & 0.38744 & 0.43906 \\
ARB-LLMX    & 0.33180 & 0.35539 & 0.36792 \\
\cdashline{1-4}
\addlinespace[0.2em]
\textbf{DBellQuant}  & \textbf{0.27694} & \textbf{0.30085} & \textbf{0.31860} \\
\bottomrule
\end{tabular}
\label{speed}
\end{table}

\subsection{Comparison and Combination with Rotation-based
Methods}

Rotation-based PTQ methods, such as QuaRot with Hadamard-based transforms, effectively reduce outliers in weights and activations. We compared low-bit quantization using these methods alone and in combination with our approach. As shown in Tab.\ref{q1} and Tab.\ref{q2}, rotation-based PTQ methods experience significant performance degradation when weight precision is reduced below 4-bit. Therefore, we focus on comparisons with state-of-the-art methods for near-1-bit weight quantization. Additionally, we evaluated the combination of DBellQuant and Hadamard-based transforms, such as QuaRot, on the Llama-2-7B model, but this combination performed worse than DBellQuant alone, as shown in Tab.~\ref{q3}. A possible explanation is that DBellQuant transforms the original unimodal distribution into a binarization-friendly bimodal distribution, while Hadamard-based transforms disrupt this structure, negatively impacting quantization performance.

\begin{figure}[t]

\centering

\resizebox{\linewidth}{!}{

\begin{tabular}{l@{\hskip 12pt}c@{\hskip 12pt}c@{\hskip 12pt}c@{\hskip 12pt}c}
\toprule
\rowcolor{lightmauve}\textbf{Method} &\textbf{Loss Function} &\makecell{\textbf{Activation Bits}}&\textbf{WikiText2 $\downarrow$} &\textbf{C4 $\downarrow$}   \\
\midrule
DBellQuant & L2& 8  & 18.90 & 24.11 \\
\textbf{DBellQuant} & \textbf{L1} &8    & \textbf{18.65} & \textbf{23.02}  \\
\cdashline{1-5} 
\addlinespace[0.2em]
DBellQuant & L2& 6   & 22.26& 26.41  \\
\textbf{DBellQuant} & \textbf{L1} & 6   & \textbf{21.69}& \textbf{25.91}  \\
\bottomrule
\end{tabular}}
\captionof{table}{Performance of different loss function}
\label{lossfunc}
\end{figure}

\begin{figure}[t]

\centering

\resizebox{\linewidth}{!}{

\begin{tabular}{l@{\hskip 12pt}c@{\hskip 12pt}c@{\hskip 12pt}c}
\toprule
\rowcolor{lightmauve}\textbf{Method} & $\epsilon$ &\textbf{WikiText2 $\downarrow$} &\textbf{C4 $\downarrow$}   \\
\midrule
DBellQuant & 0.75  & 19.57 & 22.92 \\
DBellQuant & 0.8    & 18.66 & 22.81  \\
\textbf{DBellQuant} & \textbf{0.85 }   & \textbf{17.91} & \textbf{21.83}  \\
DBellQuant & 0.9   & 18.29& 22.52  \\
\bottomrule
\end{tabular}}
\captionof{table}{Performance of different $\epsilon$}
\label{epsilon}

\vspace{-2mm}

\end{figure}

\begin{figure}[h!]

\centering
\resizebox{0.8\linewidth}{!}{%
\begin{tabular}{l@{\hskip 12pt}c@{\hskip 12pt}c}
\toprule
\rowcolor{lightmauve} \textbf{Method} & \textbf{2 bit}  & \textbf{1.1 bit} \\
\midrule
\textbf{QuaRot+RTN} & inf & inf\\
\textbf{QuaRot+GPTQ} & 22.07 & Inf \\
\textbf{DBellQuant} & - & 17.91 \\
\bottomrule
\end{tabular}
}

\captionof{table}{Llama-2-7B results.}
\label{q1}
\end{figure}

\begin{figure}[h!]

\centering

\resizebox{0.8\linewidth}{!}{

\begin{tabular}{l@{\hskip 12pt}c@{\hskip 12pt}c}
\toprule
\rowcolor{lightmauve} \textbf{Method} & \textbf{2 bit}  & \textbf{1.1 bit}\\
\midrule
\textbf{QuaRot+RTN} & inf & inf\\
\textbf{QuaRot+GPTQ} & 10.41 & Inf \\
\textbf{DBellQuant} & - & 12.79 \\
\bottomrule
\end{tabular}}
\captionof{table}{Llama-2-13B results.}
\label{q2}

\end{figure}

\begin{figure}[t]

\centering
\resizebox{\linewidth}{!}{

\begin{tabular}{l@{\hskip 12pt}c@{\hskip 12pt}c@{\hskip 12pt}c@{\hskip 12pt}c}
\toprule
\rowcolor{lightmauve}
\textbf{Method} & \makecell{\textbf{Activation} \\ \textbf{Bits}} & \makecell{\textbf{Block} \\ \textbf{Size}} &\textbf{WikiText2 $\downarrow$} &\textbf{C4 $\downarrow$}  \\
				\cmidrule{1-5}
BiLLM & 16 & 64  & 20.12 & 24.46 \\
DBellQuant & 8  & 64  & 13.67& 16.99  \\
DBellQuant & 6  & 64  & 15.65& 19.71  \\
\cdashline{1-5} 
\addlinespace[0.2em]
BiLLM & 16 & 128  & 32.48 & 40.52 \\
DBellQuant & 8  & 128  & 18.65 & 23.02  \\
DBellQuant & 6  & 128  & 21.69& 25.91  \\
\cdashline{1-5} 
\addlinespace[0.2em]
BiLLM & 16  & 256  & 43.69 & 43.21 \\
DBellQuant & 8  & 256  & 22.34 & 23.37  \\
DBellQuant & 6  & 256  & 24.68& 28.52  \\
\bottomrule

\end{tabular}}
\captionof{table}{Performance of different block size.}
\label{blocksize}

\vspace{-4mm}
\end{figure}

\begin{figure}[t]

\centering

\resizebox{0.8\linewidth}{!}{

\begin{tabular}{l@{\hskip 12pt}c@{\hskip 12pt}c}
\toprule
\rowcolor{lightmauve} \textbf{Method} & \textbf{wikitext2}  & \textbf{C4}\\
\midrule

\textbf{DBellQuant} & 17.91 & 21.83 \\
\textbf{DBellQuant+QuaRot} & 62.37 & 76.59 \\
\bottomrule
\end{tabular}}
\captionof{table}{Combination results}
\label{q3}

\end{figure}

\begin{figure}[h!]

\centering
\resizebox{0.8\linewidth}{!}{

\begin{tabular}{l@{\hskip 12pt}c@{\hskip 12pt}c}
\toprule
\rowcolor{lightmauve} \textbf{Loss Function} & \textbf{wikitext2}  & \textbf{C4}\\
\midrule

\textbf{DTMD+DTNP} & 17.91 & 21.83 \\
\textbf{only DTNP} & 28.74 & 31.42 \\
\bottomrule
\end{tabular}}
\captionof{table}{Loss Function results}
\label{lossfunc}

\end{figure}

\begin{figure}[t]

\centering
\resizebox{\linewidth}{!}{%
\begin{tabular}{l@{\hskip 12pt}c@{\hskip 12pt}c@{\hskip 12pt}c}
\toprule
\rowcolor{lightmauve} \textbf{Method} & \textbf{Model} & \makecell{\textbf{mins $\downarrow$}} & \textbf{WikiText2 $\downarrow$} \\
\midrule
\text{BiLLM} & \text{LLaMA-1-7B} & 35 & 35.04 \\
\text{ARB-LLM$_\text{X}$} & \text{LLaMA-1-7B} & 88 & 21.81 \\
\text{DBellQuant} & \text{LLaMA-1-7B} & 47 & 15.34 \\
\cdashline{1-4}
\addlinespace[0.2em]
\text{BiLLM} & \text{LLaMA-2-7B} & 37 & 32.48 \\
\text{DBellQuant} & \text{LLaMA-2-7B} & 49 & 17.91 \\
\bottomrule
\end{tabular}
}
\vspace{1pt}
\captionof{table}{Training time  comparison.}
\label{time}
\end{figure}

\begin{figure}[t]

\centering

\resizebox{\linewidth}{!}{

\begin{tabular}{l@{\hskip 12pt}c@{\hskip 12pt}c@{\hskip 12pt}c}
\toprule
\rowcolor{lightmauve} \textbf{Method} & \textbf{Bit-width $\downarrow$} & \textbf{Model Size $\downarrow$}& \textbf{Perplexity $\downarrow$} \\
\midrule
\text{FP16} & 16 & 13.5GB & 5.47 \\
\text{RTN} & 2 & 2.31GB & 1e5 \\
\text{GPTQ} & 2 & 2.31GB & 60.45 \\
\text{PB-LLM} & 1.7 & 2.08GB & 69.20 \\
\text{BiLLM} & 1.08 & 1.98GB & 32.48 \\
\cdashline{1-4}
\addlinespace[0.2em]
\text{DBellQuant} &1.15 & 2.04GB & 17.91 \\
\bottomrule
\end{tabular}}
\captionof{table}{Bit-width and model size comparison.}
\label{bit6}

\end{figure}

\subsection{Perplexity on C4 dataset}
\label{c4results}

As shown in Tab.\ref{c4}, we compare the perplexity of the OPT and LLaMA families across different model sizes on C4 dataset. 

\begin{table*}[!t]
\centering
\caption{
Perplexity of RTN, GPTQ, PB-LLM, BiLLM, ARB-LLM$_\text{X}$ and our methods on \textbf{OPT} and \textbf{LLaMA} family. The columns represent the perplexity results on \textbf{C4} datasets with different model sizes. 
}
\label{c4}
\vspace{-0.1in}
\begin{threeparttable}
\resizebox{\textwidth}{!}{ 
\begin{tabular}{lcccrrrrr} % Center alignment for all numeric columns
\toprule
% \rowcolor{gray!20}
\rowcolor{lightmauve}
\textbf{Method}   & \makecell{\textbf{Activation} \\ \textbf{Bits}} & \textbf{OPT-1.3B} & \textbf{OPT-2.7B} & \textbf{OPT-6.7B} & \textbf{LLaMA-1-7B} & \textbf{LLaMA-2-7B} &\textbf{LLaMA-2-13B}&\textbf{LLaMA-2-70B}\\
\midrule

Full Precision  & 16 & 16.07 & 14.34 & 12.71 & 7.34 & \phantom{0}7.26 & \phantom{0}6.73 &5.71   \\
\cmidrule{1-9}
 
\addlinespace[0.2em]
RTN & 16  &9999.56 & 23492.89 & 9617.07 & 194607.78 & 115058.76 & 46250.21 & 314504.09 \\
GPTQ & 16 & 6364.65& 6703.36 & 5576.82 & 186229.5 & 67954.04 & 19303.51 & 13036.32\\
PB-LLM & 16  & 168.12 & 222.15 & 104.78& 76.63 & 80.69 & 184.67 & NAN   \\
BiLLM  & 16 & 64.14 & 44.77 & 42.13 & 46.96 & 39.38 & 25.87 & 17.30\\

ARB-LLM$_\text{X}$ & 16 & 47.60 & 34.97 & 22.54 & 22.73 & 28.02 & 19.82 & 11.85 \\
\cdashline{1-9} 
\addlinespace[0.2em]
\textbf{DBellQuant} & 16 & \textbf{42.57} & \textbf{32.89} & \textbf{21.78} & \textbf{17.60} & \textbf{21.83} & \textbf{15.14} & \textbf{9.49}\\

\cmidrule{1-9}
BiLLM & 8   & 74.56 & 61.99 & 40.91 & 47.13 & 40.91 & 21.45 & 17.72\\
\cdashline{1-9} 
\addlinespace[0.2em]
\textbf{DBellQuant} & 8  & \textbf{44.60} & \textbf{32.52} & \textbf{21.56} & \textbf{18.16} & \textbf{23.80} & \textbf{15.56} & \textbf{9.61} \\
\cmidrule{1-9}
BiLLM & 6  & 7348 & 13445.21 & 63.41 & 61.65 & 63.41 & 37.66 & 19.43\\
\cdashline{1-9} 
\addlinespace[0.2em]
\textbf{DBellQuant}  & 6  & \textbf{57.14} & \textbf{45.24} & \textbf{23.12} & \textbf{19.80} & \textbf{30.24} & \textbf{17.84} & \textbf{10.12} \\

\bottomrule
\end{tabular}
} 

\end{threeparttable}
\vspace{-4mm}
\end{table*}

\subsection{Visualization of distribution of activation before and after DBellQuant}
\label{actquantresult}

In this section, we present the changes in the distribution of activation values before and after applying DBellQuant as shown in Fig.\ref{figact}. It is evident that the extreme values in the activation have been significantly reduced by a factor of 5 to 10; for instance, the maximum value decreases from approximately 3 to around 0.4. Previous studies have highlighted that one of the primary challenges in low-bit quantization of activations lies in the presence of large outliers, which expand the activation range and, consequently, amplify quantization errors. By applying DBellQuant, the activation range is effectively compressed from [-3, 3] to [-0.4, 0.4], dramatically alleviating the difficulty of quantization. This reduction in range establishes highly favorable conditions for further exploration of lower-bit quantization, such as 8-bit or even 6-bit implementations.

\begin{figure*}[!t]
    \centerline{\includegraphics[width=1.0\linewidth]{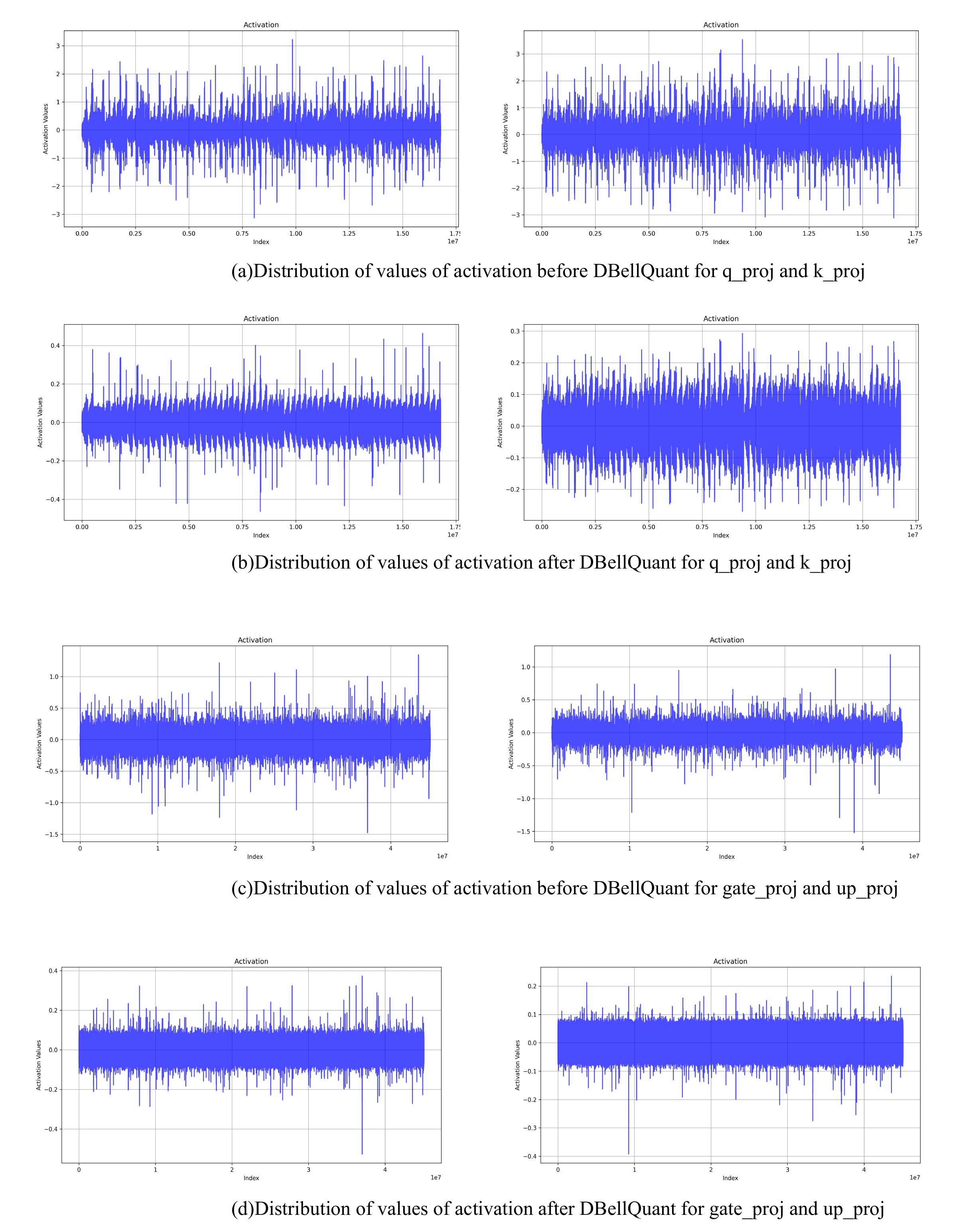}}
    \vspace{-1em}
    \caption{
        Visualization results of distribution of activation before and after DBellQuant across different blocks. 
    }
    \vspace{-1em}
    \label{figact}
\end{figure*}

\subsection{Analysis of relative error of the activation}
\label{relativeerror}
To further validate our method, we conducted experiments demonstrating its effectiveness in reducing the relative error of activations. Specifically, we randomly sampled 128 data points from the C4 dataset and extracted the $q_{\text{proj}}$ inputs from the LLaMA2-7B model using both BiLLM (without applying the inverse transform on activations) and our proposed DBellQuant (with the inverse transform applied).

We evaluated the relative error using two metrics: the Z-score, as you suggested, and the relative deviation error. The formulas are as follows:

\begin{itemize}
    \item \textbf{Z-score:}
    \[
    Z = \frac{x - \mu}{\sigma}
    \]
    where $x$ is the value, $\mu$ is the mean, and $\sigma$ is the standard deviation.
    
    \item \textbf{Relative Deviation Error:}
    \[
    \text{Relative Deviation Error} = \frac{x - u}{u}
    \]
    where $u$ is the mean value of the corresponding row.
\end{itemize}

These metrics allowed us to comprehensively assess the relative error and validate the robustness of our proposed method. The computation results are shown in Tab.~\ref{zscore}.

\begin{table}[h!]
\centering
\begin{tabular}{lcc}
\toprule
\rowcolor{lightmauve}
\textbf{} & \textbf{Z-score} & \textbf{Relative Deviation Error} \\
\midrule
\textbf{BiLLM} & 0.1584 & 33.78\\
\textbf{DBellQuant} & 0.1401 & 30.72 \\
\bottomrule
\end{tabular}
\caption{Comparison of Z-score and Relative Deviation Error between BiLLM and DBellQuant.}
\label{zscore}
\end{table}

Our method demonstrates its effectiveness not only in reducing the absolute error of activations but also in minimizing outliers from a relative distance perspective, as verified by DBellQuant.

\newpage
\onecolumn

\subsection{Algorithm}

The detailed algorithm is shown in Algorithm.~\ref{alg:LTDB}.
\label{algorithmappendix}

\begin{algorithm}[H]
\caption{Learnable Transformation for Dual-Bell Transformation (LTDB)}
\label{alg:LTDB}
\begin{algorithmic}[1]
\Function{LTDB}{$\boldsymbol{W}, \mathbf{T}, N$}
    \State \textbf{Input:} $\boldsymbol{W} \in \mathbb{R}^{n \times m}$ - a full-precision weight matrix.
    \State \phantom{\textbf{Input:}} $\mathbf{T} \in \mathbb{R}^{1 \times m}$ - an activation-aware initialization transformation matrix.
    \State \phantom{\textbf{Input:}} $N$ - the total number of epochs.
    \State \textbf{Output:} $\widetilde{\boldsymbol{W}} \in \mathbb{R}^{n \times m}$ - the transformed weight matrix.

    \For{$\text{iter} = 1, 2, \dots, N$}
        \State $\widetilde{\boldsymbol{W}} \gets \mathbf{T} \odot \boldsymbol{W}$ \Comment{Perform element-wise multiplication}
        \State $\mathcal{L}_\text{DTMD}, \mathcal{L}_\text{DTNP} \gets \text{LossFunc}(\widetilde{\boldsymbol{W}})$ \Comment{Compute the two dual-target losses for $\widetilde{\boldsymbol{W}}$}
        \State $\mathcal{L}_\text{DTNP}.backward()$ \Comment{Use Dual-Target Normalized Proportional Loss for training}
        \If{$\mathcal{L}_\text{DTMD} > \mathcal{L}_\text{DTMD-Minimum}$}
            \State \textbf{break} \Comment{Stop training based on Dual-Target Minimum Deviation Loss}
        \EndIf
        \State $\mathcal{L}_\text{DTMD-Minimum} \gets \mathcal{L}_\text{DTMD}$
    \EndFor
    \State \textbf{return} $\widetilde{\boldsymbol{W}}$
\EndFunction
\end{algorithmic}
\end{algorithm}

\subsection{Proof for Theorem.~\ref{theorem2}}
\label{proof2}
\begin{proof}

\textbf{Problem Setup:}  
By assumption, the rows of the original weight matrix \( \mathbf{W} \) are sampled independently from Gaussian distributions:
\[
\mathbf{w}_i \sim \mathcal{N}(\mu_i, \sigma_i^2), \quad \text{where } \mu_i \in \mathbb{R} \text{ and } \sigma_i > 0 \text{ for all } i.
\]
We aim to learn a transformation matrix \( \mathbf{T} \in \mathbb{R}^{m \times m} \) such that the rows of the resulting matrix \( \mathbf{W}' = \mathbf{W} \mathbf{T} \) follow a bimodal distribution.

\textbf{Learnable Transformation Definition:}  
The transformed matrix is defined as:
\[
\mathbf{W}' = \mathbf{W} \mathbf{T},
\]
where \( \mathbf{T} \) is a learnable matrix that modulates the distribution of each row \( \mathbf{w}_i' \) of \( \mathbf{W}' \). Since the rows of \( \mathbf{W} \) are Gaussian-distributed, the linear transformation by \( \mathbf{T} \) initially results in a new Gaussian distribution for each row:
\[
\mathbf{w}_i' \sim \mathcal{N}(\mu_i', \sigma_i'^2),
\]
where \( \mu_i' = \mu_i \mathbf{T} \) and \( \sigma_i'^2 = \mathbf{T}^\top \Sigma_i \mathbf{T} \), with \( \Sigma_i = \mathrm{diag}(\sigma_i^2) \) being the covariance of \( \mathbf{w}_i \).

\textbf{Inducing a Bimodal Distribution:}  
To map the Gaussian-distributed rows \( \mathbf{w}_i' \) into a bimodal distribution, we note that a bimodal distribution can be expressed as a Gaussian mixture model:
\[
g(x) = \pi \mathcal{N}(x; \mu_1, \sigma_1^2) + (1 - \pi) \mathcal{N}(x; \mu_2, \sigma_2^2),
\]
where \( \pi \in (0, 1) \) is the mixing coefficient, and \( \mu_1, \mu_2, \sigma_1^2, \sigma_2^2 \) are the parameters of the mixture components. To achieve this, \( \mathbf{T} \) is learned to ensure that the linear transformation \( \mathbf{W} \mathbf{T} \) reshapes the original Gaussian distribution into a mixture of two Gaussians.

\textbf{Parameter Optimization:}  
The learnable matrix \( \mathbf{T} \) is optimized using a loss function \( L \) that minimizes the Kullback-Leibler (KL) divergence between the empirical distribution of the rows of \( \mathbf{W}' \) and the target bimodal distribution:
\[
L = \mathrm{KL}\left(p(\mathbf{w}_i') \,\|\, \pi \mathcal{N}(\mu_1, \sigma_1^2) + (1 - \pi) \mathcal{N}(\mu_2, \sigma_2^2)\right).
\]
The optimization process adjusts the entries of \( \mathbf{T} \) to align the transformed rows \( \mathbf{w}_i' \) with the desired bimodal distribution.

\textbf{Conclusion:}  
The existence of such a learnable matrix \( \mathbf{T} \) ensures that the rows of the transformed matrix \( \mathbf{W}' = \mathbf{W} \mathbf{T} \) can follow a bimodal distribution. This completes the proof.
\end{proof}

\subsection{Analysis of the reasons double-bell distribution more suitable for binarization compared to a single-bell distribution.}
\label{doubelbellanalysis}

Directly proving that a dual-bell distribution is more suitable for binarization compared to a single-bell distribution can be challenging, as it requires setting numerous additional conditions. However, this problem becomes significantly simpler when approached from the perspective of value adjustments. By reducing the magnitude of larger absolute values and increasing smaller absolute values in a bimodal distribution, all values can be shifted closer to two central points, effectively creating a double-bell-like distribution. We can demonstrate that this approach reduces the quantization loss introduced by binarization, thereby supporting the suitability of double-bell distributions for this purpose.

\begin{theorem}

%通常情况下w呈现什么样的分布。。。
Given an input calibration activation $x \in \mathbb{R}^{n \times 1}$ and a weight vector $\boldsymbol{w} \in \mathbb{R}^{n \times 1}$, where $w_i$ is extracted from the weight matrix $\boldsymbol{W} \in \mathbb{R}^{n \times n}$ along a specific channel, we define the weight vector $\boldsymbol{w}$ as the union of two sets: - A set of several \textbf{outliers} with large absolute values, denoted as $U_o = \{o_1^*, o_2^*, \dots, o_k^*\}$, where $|o_i^*| \gg 0$ for $i \in \{1, \dots, k\}$; - A set of \textbf{normal values} with small absolute values, denoted as $U_n = \{n_1, n_2, \dots, n_{n-k}\}$, where $|n_j| \approx 0$ for $j \in \{1, \dots, n-k\}$. $\boldsymbol{w} = U_o \cup U_n.$ We now define a new weight vector $\boldsymbol{w}_{\text{new}}$ as follows:
\begin{itemize}
    \item $\boldsymbol{w}_{\text{new}} = [n_1, n_2, \dots, \gamma o_1^*, \gamma o_2^*,\dots,\gamma o_k^*,\dots, n_{n-k}]$, where $\gamma \in (\frac{1}{2}, 1)$.
    \item Alternatively, $\boldsymbol{w}_{\text{new}} = [\eta n_1, \eta n_2, \dots, o_1^*, o_2^*,\dots, o_k^*,\dots, \eta n_{n-k}]$, where $\eta \in(1,2)$.
\end{itemize}
Then, the quantization error induced by $\boldsymbol{w}_{\text{new}}$, defined as $\|x \cdot \boldsymbol{w} - x \cdot \text{binarized}(\boldsymbol{w}_{\text{new}})\|$, is strictly smaller than the original quantization error $\|x \cdot \boldsymbol{w} - x \cdot \text{binarized}(\boldsymbol{w})\|$ in both cases. 
\label{theorem1}
\end{theorem}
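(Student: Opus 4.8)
The plan is to analyze the binarization operation explicitly and compare the two error expressions term by term, exploiting the sharp separation between the outlier set $U_o$ and the normal set $U_n$. First I would write out $\text{binarized}(\boldsymbol{w}) = \widetilde{\boldsymbol{w}} \cdot \alpha + \beta$ following Eq.~\eqref{quant1}--\eqref{quant2}, and observe that because $|n_j| \approx 0$ and $|o_i^*| \gg 0$, the sign pattern $\widetilde{\boldsymbol{w}} = \text{Sign}(\boldsymbol{w} - \beta)$ is (for the scalings considered) identical for $\boldsymbol{w}$ and $\boldsymbol{w}_{\text{new}}$: the outliers keep their signs since $\gamma \in (\tfrac12,1)$ and $\eta \in (1,2)$ are bounded away from $0$, and the normals are negligible relative to the cluster centers. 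Under this invariance of the sign matrix, the only quantities that change are the per-channel scale $\alpha$ and shift $\beta$, and the whole comparison reduces to a one-dimensional statement about how $\alpha,\beta$ move when the outliers are shrunk (or the normals inflated).

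Next I would expand the squared error $\|x\cdot\boldsymbol{w} - x\cdot\text{binarized}(\boldsymbol{w}_{\text{new}})\|^2$ as a sum over coordinates, splitting the index set into $U_o$ and $U_n$. On the normal coordinates, $w_i \approx 0$, so the residual is dominated by $|\alpha|$ (plus $\beta$), and one checks that the updated $\alpha_{\text{new}},\beta_{\text{new}}$ are smaller in magnitude than the originals because shrinking the outliers by $\gamma<1$ decreases $\frac1n\sum_i|w_i-\beta|$, and inflating the already-tiny normals by $\eta<2$ does not offset this. On the outlier coordinates the residual is $|o_i^* - (\pm\alpha_{\text{new}} + \beta_{\text{new}})|$ versus $|o_i^* - (\pm\alpha + \beta)|$, and here the reconstruction target $\pm\alpha+\beta$ was already a gross underestimate of the large $|o_i^*|$; bringing $o_i^*$ down toward $\gamma o_i^*$ moves the \emph{target} that the dequantized value must match closer to what binarization can actually represent, so that term also shrinks. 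I would then sum the two contributions and show the total strictly decreases, handling the second construction ($\eta$-inflation of normals) by the symmetric argument — it increases $\alpha$ slightly but the dominant effect is that the many small-residual normal terms stay small while the outlier terms are unchanged, and a more careful accounting using $\eta<2$ closes the inequality. A cleaner route for both cases is to note that $x\cdot\boldsymbol{w} = x\cdot\boldsymbol{w}_{\text{new}} + x\cdot(\boldsymbol{w}-\boldsymbol{w}_{\text{new}})$ and that $\boldsymbol{w}_{\text{new}}$ is, by construction, strictly closer (in the $x$-weighted norm) to its own binarization than $\boldsymbol{w}$ is to its binarization, because its dynamic range is compressed toward the two cluster centers $m_1,m_2$.

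The main obstacle I anticipate is making the informal quantifiers ``$|o_i^*|\gg 0$'' and ``$|n_j|\approx 0$'' precise enough that the sign-invariance claim and the monotonicity of $\alpha,\beta$ are rigorous rather than heuristic — in particular one needs an explicit hypothesis such as $\min_i|o_i^*| > c\cdot\max_j|n_j|$ for a suitable constant $c$ depending on $k/n$, $\gamma$, $\eta$, to guarantee that no sign flips occur and that the cross-terms in the expansion have a definite sign. I would state this as an auxiliary assumption (a ``well-separatedness'' condition on the calibration weights, which holds empirically for the quasi-Gaussian LLM weight channels discussed in Sec.~\ref{sub31}) and then the remaining steps are elementary algebra. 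A secondary subtlety is that $x$ is a fixed calibration activation, so the norm is $\sum_i x_i^2 (\cdot)^2$ with possibly unequal weights $x_i^2$; I would either absorb this into the per-coordinate estimates (the argument is coordinatewise and robust to positive weights) or, if needed, assume the $x_i$ are comparable in magnitude so that the weighting does not reverse any inequality.
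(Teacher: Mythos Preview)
There is a genuine gap. The paper's argument does \emph{not} keep the activation fixed when passing from $\boldsymbol{w}$ to $\boldsymbol{w}_{\text{new}}$: it simultaneously rescales the input entries at the modified coordinates (when normals are inflated by $m$ the corresponding $x_i$ are divided by $m$; when outliers are shrunk the corresponding $x_i$ are multiplied up), so that $x_{\text{new}}^{\top}\boldsymbol{w}_{\text{new}} = x^{\top}\boldsymbol{w}$ exactly. With this compensation in place the paper takes a constant calibration vector and the extra simplifying assumption $\sum_j n_j=0$, computes $M$ and the absolute-mean scale explicitly for both $\boldsymbol{w}$ and $\boldsymbol{w}_{\text{new}}$, and then compares the single \emph{scalar} outputs $x^{\top}\text{binarized}(\boldsymbol{w})$ and $x_{\text{new}}^{\top}\text{binarized}(\boldsymbol{w}_{\text{new}})$ against the common target $x^{\top}\boldsymbol{w}$. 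The mechanism is that after compensation the outlier contribution to this scalar is unchanged, while the normal-coordinate contribution --- which has the wrong sign and was dragging the output away from the target --- is attenuated by the input-side rescaling.

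Without input compensation your outlier-term step goes the wrong way. In the $\gamma$ case, under your own approximations (normals summing to roughly zero, outliers sharing a sign) one gets $\beta_{\text{new}}\approx\gamma\beta$ and $\alpha_{\text{new}}\approx\gamma\alpha$, so the dequantized outlier value is $\approx\gamma(\alpha+\beta)$, \emph{smaller} than before. Since $\alpha+\beta$ was already an underestimate of $o_i^*$, the residual $|o_i^*-\gamma(\alpha+\beta)|$ strictly \emph{increases}: the reference is still the original $o_i^*$, not $\gamma o_i^*$. Your ``cleaner route'' has the same defect --- the inequality $\|\boldsymbol{w}_{\text{new}}-\text{binarized}(\boldsymbol{w}_{\text{new}})\|<\|\boldsymbol{w}-\text{binarized}(\boldsymbol{w})\|$ compares the wrong pair, because the theorem's reference point is always $x\cdot\boldsymbol{w}$, never $x\cdot\boldsymbol{w}_{\text{new}}$. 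To repair the argument you need to reinstate the equivalent-transformation viewpoint (scale the activation inversely at the affected coordinates) and argue at the level of the scalar inner product, as the paper does; a purely coordinatewise bound with $x$ held fixed will not close.
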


\begin{proof}
\label{proof1}

\subsubsection{Scaling Up Small Values Reduces Quantization Error}

Consider the scenario where the input vector is $X = [2, 2, 2, \dots, 2]_{n}$. Assume the weights in a single channel, $W$, are given by $[\alpha_1, \alpha_2,..., \alpha_k, \beta_1, \beta_2, \beta_3, \dots, \beta_{n-k}]$, where $[\alpha_i \in U_1]$ is the set of values with very large absolute magnitudes, satisfying $\sum_{i=1}^{k} \alpha_i = A$ and $[\beta_i \in U_2]$ are values with magnitudes close to zero, satisfying $\sum_{i=1}^{n-k} \beta_i = 0$.  $W = U_1 \cup U_2$. This structure is common in practice, as weight distributions in neural networks often exhibit a few dominant values and many small ones. As we observe, the distribution of weights along the channel dimension is mostly not symmetric around zero. Instead, it tends to be biased, with the majority leaning either towards positive or negative values. Consequently, the extreme values are predominantly either entirely positive or entirely negative, so we assume $\alpha_i >0$.

The product of the input $X$ and the weight vector $W$ is:
\begin{equation}
X \cdot W^T = 2(\alpha_1 + \alpha_2+\dots+\alpha_k + \beta_1 + \beta_2 + \dots + \beta_{n-k}) = 2(\alpha_1 + \alpha_2+\dots+\alpha_k)=2A
\end{equation}
since the sum of the $\beta_i$ is zero.

According to the quantization function, the mean value $M$ is:
\begin{equation}
M = \frac{\alpha_1 + \alpha_2+\dots+\alpha_k + \beta_1 + \beta_2 + \dots + \beta_{n-k}}{n} = \frac{A}{n}
\end{equation}

The absolute mean value, $AbsMean$, is defined as:
\begin{equation}
\begin{split}
AbsMean &= \frac{|\alpha_1 - \frac{A}{n}|+ |\alpha_2 - \frac{A}{n}|+ \dots+|\alpha_k - \frac{A}{n}| + |\beta_1 - \frac{A}{n}| + |\beta_2 - \frac{A}{n}| + \dots + |\beta_{n-k} - \frac{A}{n}|}{n}
\end{split}
\end{equation}

Given that $[\beta_1, \beta_2, \dots, \beta_{n-1}]$ are all very small in magnitude compared to $\frac{\alpha}{n}$, we can simplify the above as:
\begin{equation}
\begin{split}
AbsMean &= \frac{(\alpha_1 - \frac{A}{n}) +(\alpha_2 - \frac{A}{n}) + \dots +(\alpha_k - \frac{A}{n})  + (\frac{A}{n} - \beta_1) + (\frac{A}{n} - \beta_2) + \dots + (\frac{A}{n} - \beta_{n-k})}{n} \\
&= \frac{A+ (n-2k)\frac{A}{n} - (\beta_1 + \beta_2 + \dots + \beta_{n-k})}{n} \\
&= \frac{A + (n-2k)\frac{A}{n}}{n} \\
&= \frac{2(A - \frac{kA}{n})}{n}
\end{split}
\end{equation}
Here, the sum of the $\beta_i$ vanishes due to their zero sum constraint.

Therefore, the dequantized value for $\alpha$ is:
\begin{equation}
AbsMean + M = \frac{2(A - \frac{2kA}{n})}{n} + \frac{A}{n}
\end{equation}
and for each $\beta_i$:
\begin{equation}
-AbsMean + M = -\frac{2(A - \frac{2kA}{n})}{n} + \frac{A}{n}
\end{equation}

To reduce the quantization error associated with the small-magnitude weights, we can scale them up by a factor $m > 1$, while correspondingly scaling down the input. Specifically, we multiply each $\beta_i$ by $m$ and divide the associated input elements by $m$. The new input becomes $X_{new} = [2, 2,\dots, 2,\frac{2}{m}, \frac{2}{m}, \dots, \frac{2}{m}]_{n}$, and the new weights are $W_{new} = [\alpha_1, \alpha_2,..., \alpha_k, m\beta_1, m\beta_2, \dots, m\beta_{n-k}]$.

The output remains unchanged:
\begin{equation}
X_{new} \cdot W_{new}^T = 2(\alpha_1 + \alpha_2+\dots+\alpha_k) + \frac{2}{m} \cdot m\beta_1 + \frac{2}{m} \cdot m\beta_2 + \dots + \frac{2}{m} \cdot m\beta_{n-k} = 2A
\end{equation}
This invariance is crucial: the scaling operation does not affect the original computation, but it can impact the quantization error.

For the new weights, the mean value is:
\begin{equation}
M_{new} = \frac{\alpha_1 + \alpha_2+\dots+\alpha_k + m\beta_1 + m\beta_2 + \dots + m\beta_{n-k}}{n} = \frac{A+ m(\beta_1 + \beta_2 + \dots + \beta_{n-k})}{n} = \frac{A}{n}
\end{equation}
since the sum of the $\beta_i$ is zero.

The new absolute mean value is:
\begin{equation}
\begin{split}
AbsMean_{new} &= \frac{|\alpha_1 - \frac{A}{n}|+ |\alpha_2 - \frac{A}{n}|+ \dots+|\alpha_k - \frac{A}{n}| + |m\beta_1 - \frac{\alpha}{n}| + |m\beta_2 - \frac{\alpha}{n}| + \dots + |m\beta_{n-k} - \frac{\alpha}{n}|}{n}
\end{split}
\end{equation}

If $m$ is chosen such that $\frac{\alpha}{n}$ remains larger than all $m\beta_i$, the simplification proceeds as before:
\begin{equation}
\begin{split}
AbsMean_{new} &= \frac{(\alpha_1 - \frac{A}{n}) +(\alpha_2 - \frac{A}{n}) + \dots +(\alpha_k - \frac{A}{n})  + (\frac{A}{n} - m\beta_1) + (\frac{A}{n} - m\beta_2) + \dots + (\frac{A}{n} - m\beta_{n-k})}{n} \\
&= \frac{A+ (n-2k)\frac{A}{n} - m(\beta_1 + \beta_2 + \dots + \beta_{n-k})}{n} \\
&= \frac{A + (n-2k)\frac{A}{n}}{n} \\
&= \frac{2(A - \frac{kA}{n})}{n}
\end{split}
\end{equation}
Thus, $AbsMean_{new}$ and $M_{new}$ are identical to $AbsMean$ and $M$, and the dequantized values are unchanged. This demonstrates that scaling up the small weights does not affect the mean or absolute mean, but it can improve the quantization error, as we analyze next.

Let us now analyze the quantization error. The original output is $2\alpha$. For convenience, let $N = (n-k)(-AbsMean + M)$. The quantized output is a sum of the dequantized values for all weights.

$A > 0$ Both before and after scaling, the quantization output contains the term:
\begin{equation}
2k(AbsMean + M) = 2k\left(\frac{2(A - \frac{kA}{n})}{n} + \frac{A}{n}\right)
\end{equation}
For $n$ typically greater than 100 and $k \ll n$ it holds that:
\begin{equation}
0 < 2k(AbsMean + M) < 2A
\end{equation}
This is because the quantized value is always less than the original due to the averaging effect.

For the term $-AbsMean + M$:
\begin{equation}
\begin{split}
-AbsMean + M &= -\frac{2(A - \frac{kA}{n})}{n} + \frac{A}{n} \\
&= -\frac{A}{n} + \frac{2kA}{n^2} \\
&= \frac{A}{n}\left(\frac{2k}{n} - 1\right)
\end{split}
\end{equation}
Since $n > 100$ ,$A> 0$ and $k \ll n$
 this value is negative and its magnitude is small.

The quantization output before scaling is $2k(AbsMean + M) + 2N$, and after scaling is $2k(AbsMean + M) + N$. Because $N < 0$ and $2(AbsMean + M) < 2A$, we have:
\begin{equation}
2k(AbsMean + M) + 2N < 2k(AbsMean + M) + N < 2A
\end{equation}
This shows that scaling up the small weights reduces the quantization error, as the quantized output moves closer to the original value.

if $\alpha_i< 0$, the proof is similar.

In summary, scaling up the small weights (and correspondingly scaling down the input) does not change the original computation, but it systematically reduces the quantization error by making the quantized output more faithful to the original.

\subsubsection{Scaling Down Large Values Reduces Quantization Error}

Now, let us consider the scenario where we scale down the large-magnitude weight. Let the input $X = [1, 1, 1, \dots, 1]_{n}$, and the weights $W = [m\alpha_1, m\alpha_2,..., m\alpha_k, \beta_1, \beta_2, \beta_3, \dots, \beta_{n-k}]$, where $\alpha_1, \alpha_2,\dots,\alpha_k$ are large values, satisfying $\sum_{i=1}^{k} \alpha_i = A$ and $\alpha_i > 0$, $[\beta_1, \dots, \beta_{n-k}]$ are small, and $\sum_{i=1}^{n-k} \beta_i = 0$.

The output is:
\begin{equation}
X \cdot W^T = m(\alpha_1 + \alpha_2 + \dots + \alpha_k+ \beta_1 + \beta_2 + \dots + \beta_{n-k} = mA
\end{equation}

The mean value is:
\begin{equation}
M = \frac{m\alpha_1 + m\alpha_2 + \dots + m\alpha_k + \beta_1 + \beta_2 + \dots + \beta_{n-k}}{n} = \frac{mA}{n}
\end{equation}

The absolute mean is:
\begin{equation}
\begin{split}
AbsMean &= \frac{|m\alpha_1 - \frac{mA}{n}| + |m\alpha_2 - \frac{mA}{n}| + \dots + |m\alpha_k - \frac{mA}{n}|+ |\beta_1 - \frac{mA}{n}| + |\beta_2 - \frac{mA}{n}| + \dots + |\beta_{n-k} - \frac{mA}{n}|}{n}
\end{split}
\end{equation}

Since $\frac{mA}{n}$ is much larger than the $\beta_i$, we can simplify:
\begin{equation}
\begin{split}
AbsMean &= \frac{(m\alpha_1 - \frac{mA}{n}) + (m\alpha_2 - \frac{mA}{n}) +\dots+(m\alpha_k - \frac{mA}{n}) + (\frac{mA}{n} - \beta_1) + (\frac{mA}{n} - \beta_2) + \dots + (\frac{mA}{n} - \beta_{n-k})}{n} \\
&= \frac{m(\alpha_1 + \alpha_2+\dots+\alpha_k)+ (n-2k)\frac{mA}{n} - (\beta_1 + \beta_2 + \dots + \beta_{n-1})}{n} \\
&= \frac{mA + (n-2k)\frac{mA}{n}}{n} \\
&= \frac{2m(A - \frac{kA}{n})}{n}
\end{split}
\end{equation}

The dequantized value for $m\alpha$ is:
\begin{equation}
AbsMean + M = \frac{2m(A - \frac{kA}{n})}{n} + \frac{mA}{n} = \frac{3mA - \frac{2mkA}{n}}{n}
\end{equation}
and for each $\beta_i$:
\begin{equation}
-AbsMean + M = -\frac{2m(A - \frac{A}{n})}{n} + \frac{mA}{n} = \frac{-mA + \frac{2mkA}{n}}{n}
\end{equation}

To scale down the large value $m\alpha$, we divide it by $m$ ($m > 1$) and multiply the corresponding input element by $m$. The new input is $X_{new} = [m, 1, 1, \dots, 1]_{n}$, and the new weights are $W_{new} = [\alpha_1,\alpha_2,\dots,\alpha_k,\beta_1, \beta_2, \dots, \beta_{n-k}]$.

The output remains unchanged:
\begin{equation}
X_{new} \cdot W_{new}^T = m(\alpha_1 +\alpha_2+\dots +\alpha_k)+\beta_1 + \beta_2 + \dots + \beta_{n-k} = mA
\end{equation}

For the new weights, the mean is:
\begin{equation}
M_{new} = \frac{\alpha_1 +\alpha_2+\dots +\alpha_k+\beta_1 + \beta_2 + \dots + \beta_{n-k}}{n} = \frac{A}{n}
\end{equation}

The new absolute mean is:
\begin{equation}
\begin{split}
AbsMean_{new} &= \frac{|\alpha_1 - \frac{A}{n}| + |\alpha_2 - \frac{A}{n}|+ \dots + |\alpha_k - \frac{A}{n}|+|\beta_1 - \frac{A}{n}| + |\beta_2 - \frac{A}{n}| + \dots + |\beta_{n-1} - \frac{A}{n}|}{n}
\end{split}
\end{equation}

With appropriate $m$, we have:
\begin{equation}
\begin{split}
AbsMean_{new} &= \frac{(\alpha_1 - \frac{A}{n}) + (\alpha_2 - \frac{A}{n})+\dots+(\alpha_k - \frac{A}{n})+ (\frac{\alpha}{n} - \beta_1) + (\frac{\alpha}{n} - \beta_2) + \dots + (\frac{\alpha}{n} - \beta_{n-k})}{n} \\
&= \frac{(\alpha_1 +\alpha_2++\dots+\alpha_k) + (n-2k)\frac{A}{n} - (\beta_1 + \beta_2 + \dots + \beta_{n-1})}{n} \\
&= \frac{A + (n-2k)\frac{A}{n}}{n} \\
&= \frac{2(A - \frac{kA}{n})}{n}
\end{split}
\end{equation}

The dequantized value for $\alpha_i$ is:
\begin{equation}
AbsMean_{new} + M_{new} = \frac{2(A- \frac{kA}{n})}{n} + \frac{A}{n} = \frac{3A - \frac{2kA}{n}}{n}
\end{equation}
and for each $\beta_i$:
\begin{equation}
-AbsMean_{new} + M_{new} = -\frac{2(A - \frac{kA}{n})}{n} + \frac{A}{n} = \frac{-A + \frac{2kA}{n}}{n}
\end{equation}

Let us now examine the quantization error. The original output is $m\alpha$. The quantization output before scaling is:
\begin{equation}
\begin{split}
&k(AbsMean + M) + (n-k)(-AbsMean + M) \\
&= k\frac{3mA - \frac{2mkA}{n}}{n} + (n-k)\frac{-mA + \frac{2mkA}{n}}{n}
\end{split}
\end{equation}

The quantization output after scaling is:
\begin{equation}
\begin{split}
&km(AbsMean_{new} + M_{new}) + (n-k)(-AbsMean_{new} + M_{new}) \\
&= km\frac{3A - \frac{2kA}{n}}{n} + (n-k)\frac{-A + \frac{2kA}{n}}{n} \\
&= k\frac{3mA- \frac{2mkA}{n}}{n} + (n-k)\frac{-A + \frac{2kA}{n}}{n}
\end{split}
\end{equation}

Because $A >0$, for $n > 100$ ,$\alpha > 0$ and  $k \ll n$ it holds that:
\begin{equation}
0 < k\frac{3mA - \frac{2mkA}{n}}{n} < mA
\end{equation}
and
\begin{equation}
\frac{-A + \frac{2kA}{n}}{n} = \frac{A}{n}\left(\frac{2k}{n} - 1\right) < 0
\end{equation}

Comparing the quantization outputs, we see:
\begin{equation}
\begin{split}
&k\frac{3mA - \frac{2mkA}{n}}{n} + m(n-k)\frac{-A + \frac{2kA}{n}}{n} \\
&< k\frac{3mA - \frac{2mkA}{n}}{n} + (n-k)\frac{-A + \frac{2kA}{n}}{n} \\
&< mA
\end{split}
\end{equation}

If $\alpha_i < 0$, the proof is similar.

\end{proof}

\end{document}

% This document was modified from the file originally made available by
% Pat Langley and Andrea Danyluk for ICML-2K. This version was created
% by Iain Murray in 2018, and modified by Alexandre Bouchard in
% 2019 and 2021 and by Csaba Szepesvari, Gang Niu and Sivan Sabato in 2022.
% Modified again in 2023 and 2024 by Sivan Sabato and Jonathan Scarlett.
% Previous contributors include Dan Roy, Lise Getoor and Tobias
% Scheffer, which was slightly modified from the 2010 version by
% Thorsten Joachims & Johannes Fuernkranz, slightly modified from the
% 2009 version by Kiri Wagstaff and Sam Roweis's 2008 version, which is
% slightly modified from Prasad Tadepalli's 2007 version which is a
% lightly changed version of the previous year's version by Andrew
% Moore, which was in turn edited from those of Kristian Kersting and
% Codrina Lauth. Alex Smola contributed to the algorithmic style files.